\documentclass[lettersize,journal]{IEEEtran}

\usepackage{amsmath,amsfonts}
\usepackage{algorithmic}
\usepackage{array}
\usepackage[caption=false,font=normalsize,labelfont=sf,textfont=sf]{subfig}
\usepackage{textcomp}
\usepackage{stfloats}
\usepackage{url}
\usepackage{verbatim}
\usepackage{graphicx}
\hyphenation{op-tical net-works semi-conduc-tor IEEE-Xplore}
\def\BibTeX{{\rm B\kern-.05em{\sc i\kern-.025em b}\kern-.08em
    T\kern-.1667em\lower.7ex\hbox{E}\kern-.125emX}}
\usepackage{balance}

\usepackage{orcidlink}
\usepackage{hyperref}
\usepackage{lipsum}
\newcommand{\eg}{{\ignorespaces\emph{e.g.,}}{ }}
\newcommand{\ie}{{\ignorespaces\emph{i.e.,}}{ }}

\usepackage{amsmath,amsfonts,bm}

\def\eqref#1{(\ref{#1})}
\def\Eqref#1{Equation~(\ref{#1})}

\def\1{\bm{1}}

\def\rd{{\textnormal{d}}}

\def\mQ{{\bm{Q}}}

\def\mS{{\bm{S}}}

\def\mU{{\bm{U}}}

\DeclareMathAlphabet{\mathsfit}{\encodingdefault}{\sfdefault}{m}{sl}
\SetMathAlphabet{\mathsfit}{bold}{\encodingdefault}{\sfdefault}{bx}{n}

\def\calL{{\mathcal{L}}}

\newcommand{\E}{\mathbb{E}}

\newcommand{\R}{\mathbb{R}}

\def\dt{{ \mathrm{d} t }}

\newcommand{\nicefracpartial}[2]{\nicefrac{\partial #1}{\partial  #2}}

\newcommand{\fracpartial}[2]{\frac{\partial #1}{\partial  #2}}
\newcommand{\fracdiff}[2]{\frac{\rd #1}{\rd  #2}}
\newcommand{\br}[1]{\left[#1\right]}
\newcommand{\pr}[1]{\left(#1\right)}
\newcommand{\T}{\top}
\newcommand{\norm}[1]{\left\Vert#1\right\Vert}

\newcommand{\diag}{\mathrm{diag}}
\newcommand{\vectorize}{\mathrm{vec}}
\newcommand{\Inv}{\dagger}

\newcommand*\Qx{ {Q_k^x} }%
\newcommand*\Qu{ {Q_k^\theta} }%
\newcommand*\Qxx{ {Q_k^{xx}} }%
\newcommand*\Qxu{ {Q_k^{x\theta}} }%
\newcommand*\Qux{ {Q_k^{\theta x}} }%
\newcommand*\Quu{ {Q_k^{\theta\theta}} }%

\newcommand*\fx{ {f_k^x} }%
\newcommand*\fu{ {f_k^\theta} }%
\newcommand*\fxx{ {f_k^{xx}} }%
\newcommand*\fxu{ {f_k^{x\theta}} }%
\newcommand*\fuu{ {f_k^{\theta\theta}} }%

\newcommand*\lu{ {\gamma \bar \theta_k} }%
\newcommand*\luu{ {\gamma} }%

\newcommand*\Vx{ {V_{k+1}^x} }%
\newcommand*\Vxx{ {V_{k+1}^{xx}} }%

\newcommand*\dxt{ {\delta x_{k}} }%
\newcommand*\dut{ {\delta \theta_{k}} }%

\newcommand*\Fx{ {\fracpartial{F}{x_t}} } %
\newcommand*\Fu{ {\fracpartial{F}{\theta}} }%

\usepackage{nicefrac}  %
\usepackage{mathtools} %
\newcommand\numberthis{\addtocounter{equation}{1}\tag{\theequation}}

\usepackage{amsthm} %

\usepackage{thmtools}
\usepackage{thm-restate}
\declaretheorem[
  style=boldrestatable,
  name=Theorem
]{thm}

\declaretheorem[name=Lemma, numberlike=thm]{lemma}

\usepackage{algorithm}
\usepackage{algorithmic}
\usepackage{bbm}

\usepackage{color,soul}
\colorlet{color1}{green!50!black}
\colorlet{color2}{orange!95!black}
\colorlet{color3}{red!80!black}
\colorlet{color4}{red!65!black}
\colorlet{color5}{blue!75!green}
\colorlet{blueee}{blue!50!black}
\definecolor{nicerred}{HTML}{99292A} %
\definecolor{niceyellow}{HTML}{D89A3C} %
\definecolor{nicerblue}{HTML}{417481} %

\usepackage{booktabs} %
\usepackage{multirow} %
\newcommand{\specialcell}[2][c]{\begin{tabular}[#1]{@{}c@{}}#2\end{tabular}}
\newcommand{\specialcellr}[2][r]{\begin{tabular}[#1]{@{}r@{}}#2\end{tabular}}
\newcommand{\specialcelll}[2][l]{\begin{tabular}[#1]{@{}l@{}}#2\end{tabular}}

\usepackage{thmtools}

\hypersetup{
    colorlinks,
    linkcolor={nicerblue},
    citecolor={blue!50!black},
    urlcolor={blue!80!black}
}

\begin{document}

\title{Optimal Control Theoretic Neural Optimizer: \\ From Backpropagation to Dynamic Programming}

\author{%
  Guan-Horng Liu \orcidlink{0000-0002-8989-7568},
  Tianrong Chen \orcidlink{0000-0002-7966-5288}, and
  Evangelos A. Theodorou \orcidlink{0000-0002-9371-9633}
\thanks{Manuscript created \today.
This research was supported by the ARO Award  \#W911NF2010151 and DoD Basic Research Office Award HQ00342110002.
\textit{(Corresponding author: Guan-Horng Liu.)}

Guan-Horng Liu is affiliated with FAIR, Meta. Evangelos Theodorou is affiliated with the School of Aerospace Engineering, Georgia Institute of Technology. Tianrong Chen is affiliated with Apple MLR. 
Email: \\ 
{ghliu@gatech.edu, tchen54@apple.com, evangelos.theodorou@gatech.edu}

Work was done while Guan and Tianrong were at Georgia Tech.%
}}

\markboth{Journal of \LaTeX\ Class Files,~Vol.~18, No.~9, September~2020}%
{Optimal Control Theoretic Neural Optimizer: From Backpropagation to Dynamic Programming}

\maketitle

\begin{abstract}
  Optimization of deep neural networks (DNNs) has been a driving force in the advancement of modern machine learning and artificial intelligence.
  With DNNs characterized by a prolonged sequence of nonlinear propagation,
  determining their optimal parameters given an objective naturally fits within the framework of Optimal Control Programming.
  Such an interpretation of DNNs as dynamical systems has
  proven crucial in offering a theoretical foundation for principled analysis from numerical equations to physics.
  In parallel to these theoretical pursuits, this paper focuses on an algorithmic perspective.
  Our motivated observation is the striking algorithmic resemblance between the Backpropagation algorithm for computing gradients in DNNs and the optimality conditions for dynamical systems, expressed through another backward process known as dynamic programming.
  Consolidating this connection, where Backpropagation admits a variational structure, solving an approximate dynamic programming up to the first-order expansion leads to a new class of optimization methods exploring higher-order expansions of the Bellman equation.
  The resulting optimizer, termed Optimal Control Theoretic Neural Optimizer (OCNOpt),
  enables rich algorithmic opportunities, including layer-wise feedback policies, game-theoretic applications, and higher-order training of continuous-time models such as Neural ODEs.
  Extensive experiments demonstrate that OCNOpt improves upon existing methods in robustness and efficiency while maintaining manageable computational complexity,
  paving new avenues for principled algorithmic design grounded in dynamical systems and optimal control theory.
\end{abstract}

\begin{IEEEkeywords}
Optimization, deep neural network, optimal control, differential dynamic programming, backpropagation.
\end{IEEEkeywords}

\section{Introduction}
\IEEEPARstart{T}{he} last few decades have marked the exploding era of deep learning.
By constructing a nonlinear transformation interconnected through a long sequence of layer propagation, deep neural networks (DNNs) has showcased remarkable capabilities in recognizing intricate patterns from high-dimensional data \cite{lecun2015deep}, making them instrumental in addressing problems such as computer vision \cite{he2017mask}, distribution modeling \cite{goodfellow2014generative}, and even deciphering mathematical structures \cite{fawzi2022discovering}.

Numerous efforts have been dedicated to refining the architectural design of DNNs to better leverage domain-specific structures.
Examples include the integration of residual connections for pattern recognition \cite{he2016deep}, exploring their continuous-time counterparts for time-series data \cite{chen2018neural}, and adapting geometry, such as graph or symmetry \cite{bronstein2017geometric}.
Despite these rapid advancements in architectural innovation, the development of optimization methods for DNNs has progressed at a comparatively slower pace, and algorithms like Backpropagation \cite{lecun1988theoretical}, dating back in the 1980s, remain the predominant choice for training DNNs.
This contrasts dramatically with practices outside of the machine learning area, in fields such as operations research \cite{heyman2004stochastic}, aerospace \cite{trelat2012optimal}, and decision-making theory \cite{bertsekas2012dynamic}, where different kinds of systems often necessitate distinct optimization methods and methodologies to handle.

\begin{figure}[!t]
  \centering
  \includegraphics[width=.95\linewidth]{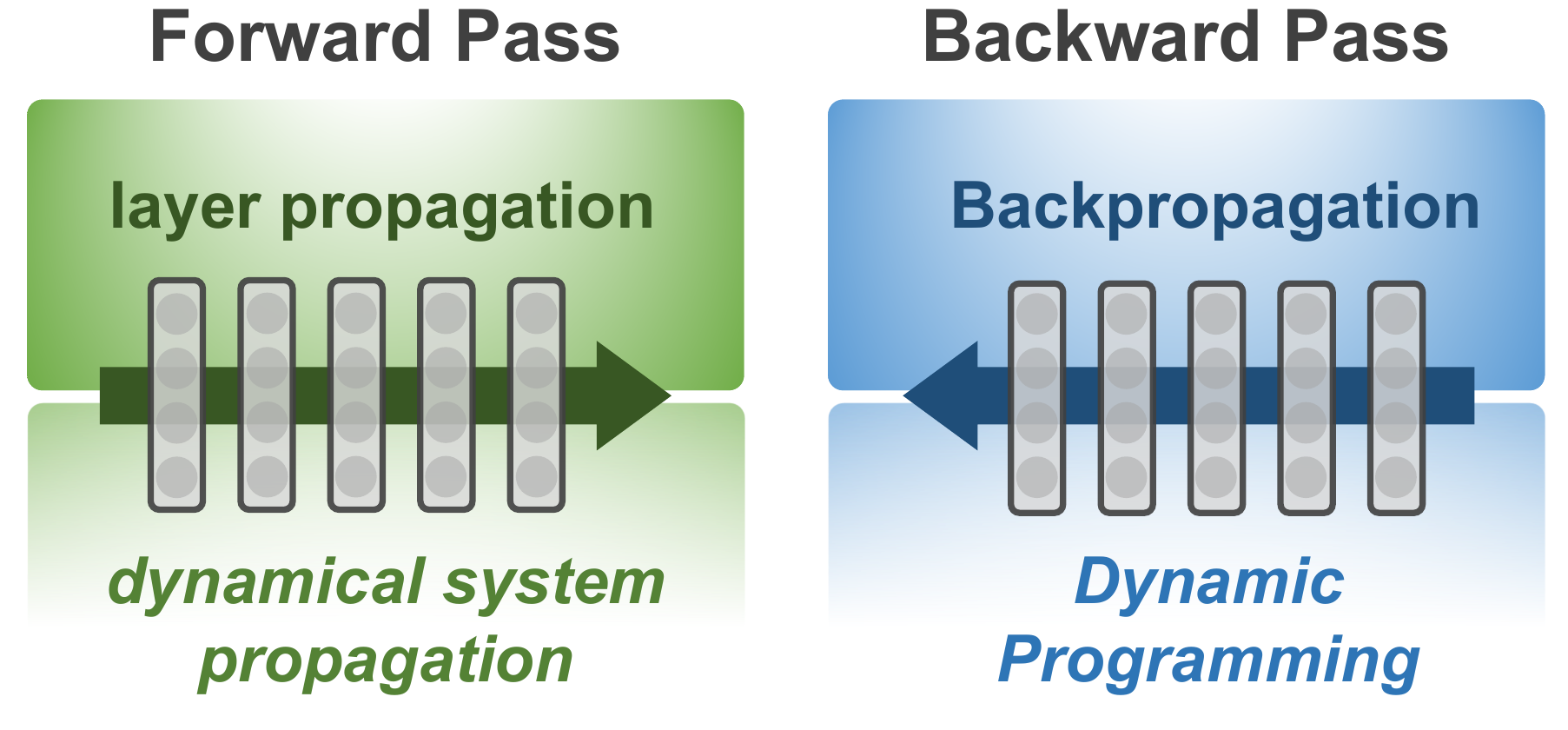}
  \caption{
    We propose a new class of training optimizers for DNNs, termed \textbf{O}ptimal \textbf{C}ontrol Theoretic \textbf{N}eural \textbf{Opt}mizer (\textbf{OCNOpt}), that interpret the forward pass as dynamical system propagation and, from which, explore the variational structure of Backpropagation as solving dynamic programming.
  }
  \label{fig:1}
\end{figure}

The aforementioned rational underscores the need for optimization methods tailored to diverse scientific structures and raises open questions that merit exploration.
Firstly, it remains intriguing to ascertain whether such a fundamental discipline, if any, unifies algorithmic connection to existing training methods and how it facilitates new applications and analyses that are otherwise challenging to foresee.
Practically, understanding how such a perspective enhances training over current methods is also of keen interest.
This paper undertakes an exploratory attempt in addressing these questions.

Specifically, our mathematical framework is anchored in the interpretation of DNNs as nonlinear dynamical systems, viewing  each layer as a distinct time step \cite{weinan2017proposal}.
This dynamical system perspective accentuates the sequential transformation inherent in modern DNNs---an essential element that remains consistent across modern deep architectures, yet is flexible enough to accommodate distinct domain-specific structures.
Futhermore, it enables systematic analysis spanning from numerical equations \cite{lu2018beyond} to principles in physics \cite{greydanus2019hamiltonian}.
For example, by casting residual networks \cite{he2016deep} as a discretization of ordinary differential equations, we gain valuable insights into the loss landscape \cite{lu2020mean} and spark the creation of novel architectures
\cite{chang2018reversible}.

This dynamical system perspective also prompts control-theoretic analysis, which further interprets the network weights as control variables.
In this light, the training process can be cast as an Optimal Control Programming (OCP), where both methodologies aim to optimize certain variables (weights \emph{vs.} controls) under the chain structure (DNNs \emph{vs.} dynamical systems).
The OCP interpretation has been explored recently to provide theoretical characterizations of the learning process \cite{han2019mean,seidman2020robust} and inspire algorithms for hyper-parameter adaptation \cite{li2017stochastic} and computational acceleration \cite{gunther2020layer,zhang2019you}.

On the optimization side, however,
OCP-inspired optimizers still face limitations, often confined to small-size dataset \cite{li2017maximum}.
Given that the necessary condition of OCP, well-known as the dynamic programming principle \cite{bellman1954theory}, shares a striking algorithmic resemblance to the Backpropagation process---both involving backward computation from the last time step or layer---a fundamental understanding between OCP and existing optimization methods for training DNNs is imperative to enhance practical applicability of OCP-inspired optimizers.

In this work, we make a significant advance toward the development of optimal control-theoretic training methods, drawing inspiration from OCP perspectives while maintaining algorithmic proximity to existing optimizers.
Our investigation unveils that Backpropagation, in essence, solves an approximate dynamic programming with associated Bellman objectives expanded only up to the first order.
This results in the omission of second-order derivatives, critical for enabling control-theoretic applications \cite{pan2015robust}.
By preserving these second-order derivatives during the training of DNNs, we introduce layer-wise feedback gains, game-theoretic (\eg bandit \cite{seldin2014one}) integration, and joint optimization of the architectures, showcasing improvements over prior methods in robustness and efficiency.
As the resulting method stands as the optimal control-theoretic generalization to prior approaches,
we term it \textbf{Optimal Control Theoretic Neural Optimizer (OCNOpt)}.

OCNOpt is the first OCP-inspired optimizer scalable to general deep architectures, including feedforward, convolutional, residual networks, and Neural Ordinary Differential Equations \cite{chen2018neural}.
Unlike traditional OCP methods reliant on expansive higher-order derivatives,
OCNOpt employs efficient second-order computations.
Specifically, we employ Gauss-Newton approximation to the last layer and show that such a low-rank factorization propagates backward through computations of all other second-order derivatives without imposing additional assumptions.
Inspired by the practical successes of existing methods, we further adapt similar curvature estimation techniques to the Bellman equation's optimization landscape.
These approximations reduce the complexity of OCNOpt by orders of magnitude compared to OCP-inspired baselines and enable significantly stabler training in practice.

In summary, we present the following contributions.
\begin{itemize}
  \item We establish a novel algorithmic perspective linking DNN training to Optimal Control Programming (OCP), unveiling for the first time a fundamental connection between existing training methods and OCP via the Differential Dynamic Programming (DDP) algorithm, without limiting to specialized settings as in prior attempts \cite{li2017maximum,jin2020pontryagin}.

  \item We introduce \textbf{OCNOpt}, a new class of training optimizers characterized by a backward computation similar to Backpropagation, but inheriting control-theoretic principles from the DDP framework.

  \item Compared to the DDP algorithm, OCNOpt achieves more efficient and stable training through computation simplification enabled by low-rank factorization and curvature approximation similar to existing training methods, with the latter arising exclusively from our framework.
  
  \item  We demonstrate the versatility of OCNOpt in training various DNN architectures, ranging from feedforward networks to Neural ODEs, leveraging game theoretic extensions and continuous-time OCP methodologies.

  \item Through extensive experiments, OCNOpt demonstrates competitive performance in tasks such as image classification, time-series prediction, and probabilistic modeling, showcasing improved convergence and robustness compared to standard optimizers.
\end{itemize}

\section{Related Work}

\noindent
The conceptualization of DNNs as dynamical systems, initiated by Weinan's proposal \cite{weinan2017proposal}, is foundational and practically versatile.
It serves as a pivotal basis for both theoretical investigations into the success of modern deep architectures \cite{lu2020understanding} and the development of innovative architectures \cite{lu2018beyond, greydanus2019hamiltonian}.
Concurrently, this perspective facilitates analyses grounded in numerical equations and physics principles, inspiring continuous-time models like Neural ODEs \cite{chen2018neural} and Neural Stochastic Differential Equations \cite{li2020scalable}, among others.
These advancements extend the capabilities of deep learning models for addressing new challenges from time-series prediction \cite{kidger2020neural} and generative modeling \cite{dhariwal2021diffusion} to climate change \cite{kashinath2021physics}.
Our work contributes to this trajectory, introducing a new research area by incorporating optimal control theory into the development of practical training algorithms.

On the other hand, the Optimal Control Programming (OCP) emerges naturally from the dynamical system perspective, offering fundamental characterizations in optimization processes.
Beyond theoretical explorations \cite{han2019mean,seidman2020robust},
OCP has demonstrated versatility in algorithmic development, inspiring methods for parallel propagation \cite{gunther2020layer,zhang2019you}, hyper-parameter adaptation \cite{li2017stochastic}, and optimization of specific network classes such as discrete weights \cite{li2018optimal}.

In terms of OCP-inspired training algorithms, , two closely-related studies, namely Li et al. \cite{li2017maximum} and Jin et al. \cite{jin2020pontryagin}, draw inspiration from the Pontryagin maximum principle (PMP) \cite{pontryagin2018mathematical}.
Notably, PMP characterizes the first-order necessary condition to the dynamic programming (DP) principle.
Our approach, relying on DP, thus provides a more suitable mathematical framework conducive to the development of higher-order methods.
While sharing a similar differential framework, Jin et al. \cite{jin2020pontryagin} focused on control tasks in a significantly lower dimension, in contrast to our work on training high-dimensional DNNs.
Conversely, Li et al. \cite{li2017maximum} introduced layer-wise objectives as an \emph{augmented} Hamiltonian.
This necessitates additional hyper-parameter tuning and deviates from the algorithmic connections established in our approach, leading to observed instability, as we will show in Sec.~\ref{sec:5.A}.

\section{Backpropagation Revisited: An Optimal Control Viewpoint}

\noindent
\textbf{Notation:}
Given an indexed function $f_s(x_s, \theta_s)$,
we shorthand its derivatives at some fixed point $(\bar x_s, \bar \theta_s)$ by, \eg
$\fracpartial{f_s}{x_s} \equiv f_s^x$
and $\fracpartial{}{\theta_s}f_s^x \equiv f_s^{x\theta}$.
The subscript $s$ can be either discrete $k \in [K]$ or continuous $t \in [0,T]$ index.
Let $A = U \Sigma V^\T$ be singular-value decomposition of a matrix $A$, where $\Sigma = \diag(\lambda_1, \cdots, \lambda_r, 0, \cdots)$. Its pseudo inversion $A^\dagger$, along with the Kronecker $\otimes$ and Hadamard $\odot$ products, are defined by
\begin{align*}
    \textstyle
    &A^\dagger = U \Sigma^\dagger V^\T, \quad
    \Sigma^\dagger = \diag(\frac{1}{\lambda_1}, \cdots, \frac{1}{\lambda_r}, 0, \cdots), \\
  \forall a,b &\in \R^m, ~~
  a \otimes b  = \vectorize(b a^\T), ~~ c = a \odot b \Leftrightarrow c_{i} = a_i b_i,
\end{align*}
where $\vectorize(\cdot)$ denotes matrix vectorization.
We preserve $I$ for the identity matrix and represent all vectors as column vectors.

Let us revisit training of a feedforward neural network
(\eg fully-connected or convolutional nets) with depth $K$:
\begin{equation}
  \label{ff}
  \min_{ \theta_k: k \in [K] } \calL(x_K), \quad \text{s.t. }
  x_{k+1} = f_k(x_k, \theta_k),
\end{equation}
where $x_k$ and $\theta_k$
are the vectorized hidden state and parameter at each layer $k$.
For instance,
$\theta_k$ gathers the weight $W_k$ and bias $b_k$ for a fully-connected layer; then, the propagation rule $f_k$ reads
$f_k(x_k,\theta_k) \coloneqq \sigma(W_k x_k + b_k)$ with $\sigma(\cdot)$ being an activation function.
Given an objective function $\calL(x_K)$, for instance, cross-entropy in classification,
the gradient of the parameter at each layer can be computed by
$\fracpartial{\calL}{\theta_k} = \fu^\T \fracpartial{\calL}{x_{k+1}}$,
where $\fracpartial{\calL}{x_{k}}$ can be computed via the Backpropagation:
\begin{equation}
  \label{bp}
  \fracpartial{\calL}{x_k}
  = \fracpartial{x_{k+1}}{x_k}^\T \fracpartial{\calL}{x_{k+1}}
  = \fx^\T \fracpartial{\calL}{x_{k+1}}.
\end{equation}

The chained constraint in \eqref{ff} can be interpreted as a discrete-time dynamical system,
propagating the \emph{state variable} $x_k$ given the \emph{control variable} $\theta_k$.
With that, the training process,
\ie finding optimal parameters $\{\theta_k\}_{k \in [K]}$ for all layers, can be described by Optimal Control Programming (OCP).
OCP has proven versatile in diverse scientific areas, ranging from
trajectory optimization \cite{theodorou2010generalized},
behavioral understanding \cite{liu2022deep},
and machine learning \cite{chen2022likelihood}.
Mathematical understanding of OCP can be dated back to 1950s, during which its necessary condition was developed by Richard Bellman \cite{bellman1954theory}:
\begin{equation}
  \begin{split}
  \label{bellman}
  &V_k(x_k) = \min_{\theta_k} Q_k(x_k,\theta_k), \quad V_K(x_K) = \calL(x_K) \\
  &\quad~~ Q_k(x_k,\theta_k) \coloneqq \ell(\theta_k) + V_{k+1}(f_k(x_k, \theta_k)).
\end{split}
\end{equation}

\Eqref{bellman} is known as the Bellman equation or the dynamic programming principle.
It decomposes the optimization of a sequence of variables, $\{ \theta_k\}_{k \in [K]}$,
into a sequence of optimization over each of them.
The value function $V_k(\cdot)$ summarizes the optimal cost-to-go for each stage $k$, while the parameter regularization $\ell_k(\cdot)$ is often set as the $\ell_2$-norm weight decay,
$\ell(\theta_k) \coloneqq \frac{\gamma}{2}\norm{\theta_k}^2$ with $\gamma {\ge} 0$,
for training DNNs.

The crux of introducing \eqref{bellman} alongside \eqref{bp} is fundamentally motivated by the shared characteristic that the optimality condition for a chained constraint, exemplified in \eqref{ff}, is commonly formulated through a backward process. This prompts a natural question regarding the relationship between the two, to which we provide elucidation in the subsequent theorem.
\begin{restatable}{thm}{thmone}\label{thm:1}
  Let $(\bar x_k, \bar \theta_k)$ be the solution to the chain constraint in \eqref{ff}.
  Expand the Bellman objective $Q_k(x_k, \theta_k)$ by
  \begin{equation}\label{bp-Q}
    \hspace{-6pt}
    Q_k(\bar x_k{,} \bar \theta_k) {+}~ {\Qx^\T}{(x_k {-} \bar x_k)} {+}~ \Qu^\T(\theta_k {-} \bar \theta_k) {+} {\frac{1}{2}}\norm{\theta_k {-} \bar \theta_k}^2%
    \hspace{-3pt}
  \end{equation}
  where %
  $\Qx, \Qu$ are the first-order derivatives of $Q_k$ w.r.t. $x_k, \theta_k$,
  \begin{equation}
    \label{Qx-Qu}
    \Qx = \fx^\T \Vx  , \quad \Qu = \lu + \fu^\T \Vx.
  \end{equation}
  Then, solving the dynamic programming \eqref{bellman} recovers Backpropagation \eqref{bp} and gradient descent and weight decay $\gamma \ge 0$.
  Specifically, it holds for all $k \in [K]$ that
  \begin{equation*}
    V_k^x = \fracpartial{\calL}{x_k} \quad \text{and} \quad
    \theta_k^\star = \bar \theta_k - \pr{\fracpartial{\calL}{\theta_k} + \gamma \bar \theta_k}.
  \end{equation*}
\end{restatable}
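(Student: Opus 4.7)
The plan is to prove Theorem~\ref{thm:1} by backward induction on $k$, carrying two claims simultaneously: (i) the first-order coefficient of the value function satisfies $V_k^x = \partial \calL / \partial x_k$, and (ii) minimizing the approximate Bellman objective \eqref{bp-Q} yields the gradient-descent-with-weight-decay update stated. The base case is immediate from the boundary condition $V_K(x_K) = \calL(x_K)$, which gives $V_K^x = \partial \calL/\partial x_K$; there is no $\theta_K$ to optimize, so only (i) is needed at $K$.

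For the inductive step, assume (i) holds at $k{+}1$. First I would note that \eqref{Qx-Qu} is just the chain rule applied to $Q_k(x_k,\theta_k) = \ell(\theta_k) + V_{k+1}(f_k(x_k,\theta_k))$ at $(\bar x_k, \bar\theta_k)$. Plugging the inductive hypothesis $\Vx = \partial\calL/\partial x_{k+1}$ into $\Qx = \fx^\T \Vx$ and comparing with the Backpropagation recursion \eqref{bp} gives $\Qx = \partial\calL/\partial x_k$. Similarly, $\Qu = \gamma\bar\theta_k + \fu^\T \Vx = \gamma\bar\theta_k + \partial\calL/\partial\theta_k$, using $\partial\calL/\partial\theta_k = \fu^\T \partial\calL/\partial x_{k+1}$. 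Because \eqref{bp-Q} contains no cross term in $(x_k-\bar x_k)(\theta_k-\bar\theta_k)$ and the Hessian in $\theta_k$ is the identity, the first-order optimality condition in $\theta_k$ reads $\Qu + (\theta_k - \bar\theta_k) = 0$, so
\[
\theta_k^\star = \bar\theta_k - \Qu = \bar\theta_k - \pr{\fracpartial{\calL}{\theta_k} + \gamma \bar\theta_k},
\]
which is claim (ii). Substituting $\theta_k^\star$ back into \eqref{bp-Q} yields $V_k(x_k) = Q_k(\bar x_k,\bar\theta_k) - \tfrac{1}{2}\|\Qu\|^2 + \Qx^\T(x_k - \bar x_k)$, and differentiating in $x_k$ gives $V_k^x = \Qx = \partial\calL/\partial x_k$, closing the induction.

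There is no serious technical obstacle; every step is a routine chain-rule identification. The one point that deserves careful articulation is \emph{why} the absence of a mixed $x_k\theta_k$ term in \eqref{bp-Q} is essential: it is precisely this decoupling that makes the $\theta_k$-minimizer independent of $x_k$, so the resulting update is a pure gradient step and the value-function gradient collapses onto the Backpropagated quantity $\partial\calL/\partial x_k$. In the higher-order OCNOpt expansions developed later in the paper, this cross term reappears and generates layer-wise feedback gains; showing that dropping it recovers standard Backpropagation is, conceptually, the substance of the theorem.
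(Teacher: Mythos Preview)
Your proposal is correct and follows essentially the same argument as the paper's proof: backward induction showing $V_k^x = \Qx = \fx^\T \Vx$ coincides with the Backpropagation recursion, then reading off the $\theta_k$-minimizer from the identity-Hessian quadratic. The only cosmetic difference is that the paper phrases the computation as the full second-order DDP update formulas \eqref{opt-u}--\eqref{Vx-xx} specialized to $\Quu \coloneqq I$, $\Qux \coloneqq 0$, $\Qxx \coloneqq 0$, whereas you work directly from the expansion \eqref{bp-Q}; the underlying steps are identical.
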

Theorem~\ref{thm:1} unveils the variational structure inherent in the Backpropagation \eqref{bp}, where the backward process instantiates an approximate dynamic programming algorithm. Specifically, it approximates the Bellman objective $Q_k(x_k, \theta_k)$ in \eqref{bellman} using first-order expansions and a quadratic term in $\theta$, resulting in first-order updates.
This solidifies OCP as a mathematical tool for studying neural network training
and, as shown below, unlocks a new class of optimization methods with rich algorithmic advancements.
The proof is provided in Appendix~\ref{sec:proof}.

\section{Optimal Control Theoretic Neural Optimizer}

\noindent
Motivated by Theorem~\ref{thm:1}, in this section we outline a new class of optimizers that capitalizes on the inherent connection with optimal control theory.
All proofs are deferred to Appendix~\ref{sec:proof}.

\subsection{Differential Dynamic Programming (DDP)} \label{sec:ddp}

\noindent
Theorem~\ref{thm:1} suggests that Backpropagation \eqref{bp} expands the Bellman objective $Q_k(x_k, \theta_k)$ up to the first order as in \eqref{bp-Q}.
One may consider second-order approximation on the same fixed point $(\bar x_k, \bar \theta_k)$, where the Bellman equation \eqref{bellman} becomes
\begin{equation}
  \label{Q-full-expand}
  Q_k(\bar x_k, \bar \theta_k) +
  \frac{1}{2}
  \begin{bmatrix}
    1 \\
    \dxt \\
    \dut
  \end{bmatrix}^\T%
  \begin{bmatrix}
    0 & \Qx^\T & \Qu^\T \\
    \Qx & \Qxx & \Qxu \\
    \Qu & \Qux & \Quu
  \end{bmatrix}
  \begin{bmatrix}
    1 \\
    \dxt \\
    \dut
  \end{bmatrix}.
\end{equation}
We shorthand
$\delta x_k \coloneqq x_k - \bar x_k$ and
$\delta \theta_k \coloneqq \theta_k - \bar \theta_k$ and call them the \emph{differential} variables.
The second-order derivatives in \eqref{Q-full-expand} can be computed from \eqref{Qx-Qu} via standard chain rule:
\begin{equation}
  \label{Q2nd}
  \begin{split}
  \Qxx &= \fx^\T \Vxx \fx + \fxx\Vx, \\
  \Quu &= \fu^\T \Vxx \fu + \fuu\Vx + \luu I,\\
  \Qxu &= \fx^\T \Vxx \fu + \fxu\Vx,\quad  \Qux = \Qxu^\T, \\
\end{split}
\end{equation}
where the second-order derivatives of $f_k$ are typically omitted, following a conventional treatment for improved stability \cite{todorov2005generalized}.

Solving
\eqref{Q-full-expand} w.r.t. $\dut$ is a standard Quadratic Programming problem. 
For a sufficiently large $\gamma$ such that the Hessian $\Quu$ is definite,
the analytic solution is known as a \textit{mapping} $\delta \theta_k^\star(\cdot)$,
\begin{equation}
  \label{opt-u}
  \delta \theta_k^\star(\delta x_k) = - \pr{\Quu}^\Inv \pr{ \Qu + \Qux \dxt  },
\end{equation}
composing an open-loop gain from $\Qu$ and a closed-loop gain from $\Qux \dxt$.
That implies that the optimal parameter update rule \eqref{opt-u} functions as a \emph{feedback policy}, adjusting its outputs based on the differential state $\dxt \coloneqq x_k - \bar x_k$.
Conceptually, $\dxt$ can be any deviation away from the fixed point $\bar x_k$ where $Q_k$ is expanded in \eqref{Q-full-expand}.
Standard practices in OCP \cite{tassa2012synthesis,sun2018min} typically involve setting it to the difference when the parameter updates are applied up to layer $k$, starting from $\delta x_0 \coloneqq 0$,
\begin{equation}
  \label{dxk}
  \delta x_{k+1} \coloneqq f_k(\bar x_k + \dxt, \bar \theta_k + \delta \theta_k^\star(\dxt)) - \bar x_{k+1}.
\end{equation}
Importantly, the differential state $\delta x_k$ in \eqref{dxk} encompasses all changes propagated from the preceding layers, including those that potentially induce instability.
The resulting feedback gain therefore enhances 
robustness of the optimization process \cite{de1988differential}.

Substituting the analytic second-order solution in \eqref{opt-u} to \eqref{bellman} yields the second-order expansion of value function 
$V_k(x_k) = Q_k(x_k, \theta_k^\star)$, whose first and second-order derivatives follow
\begin{subequations}
  \label{Vx-xx}
  \begin{align}
    V_k^x &= \Qx - \Qxu \pr{\Quu}^\Inv \Qu, \label{Vx} \\
    V_k^{xx} &= \Qxx - \Qxu \pr{\Quu}^\Inv \Qux. \label{Vxx}
  \end{align}
\end{subequations}
These derivatives \eqref{Vx-xx} suffice to compute the derivatives of the Bellman objective for the preceding layer $Q_{k-1}$, as implied in (\ref{Qx-Qu},\ref{Q2nd}), and its corresponding $\delta \theta_{k-1}^\star$.
As such, it is practically sufficient to propagate \eqref{Vx-xx} rather than the value function itself.

The aforementioned procedure is known as the Differential Dynamic Programming (DDP) algorithm \cite{jacobson1970differential}, an iterative method that, akin to Theorem~\ref{thm:1}, approximates dynamic programming computations except with a second-order expansion.
Consequently, Theorem~\ref{thm:1} implies the following result.%
\begin{restatable}{corollary}{corotwo}\label{coro:2}
  When $\Qux \coloneqq 0$ for all $k$, the DDP algorithm degenerates to the Newton's method, and, if additionally
  $\Quu \coloneqq I$,
  it further recovers gradient descent.
\end{restatable}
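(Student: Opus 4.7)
The plan is to specialize the DDP recursion \eqref{opt-u}--\eqref{Vx-xx} under the two stated assumptions and match the resulting update rule to the textbook Newton and gradient-descent formulas, leveraging Theorem~\ref{thm:1} to handle the first-order pieces and a short backward induction to handle the second-order piece.

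Concretely, I would first substitute $\Qux \equiv 0$ into \eqref{opt-u}: the closed-loop feedback term $\Qux \dxt$ drops out, so the optimal update collapses to the open-loop form $\dut^\star = -(\Quu)^\Inv \Qu$, which no longer depends on $\dxt$. The same substitution reduces \eqref{Vxx} to $V_k^{xx} = \Qxx$. Next I would invoke Theorem~\ref{thm:1}, which already identifies $V_{k+1}^x$ with $\partial\calL/\partial x_{k+1}$ along the nominal trajectory, so that \eqref{Qx-Qu} rewrites the open-loop gain as $\Qu = \lu + \fu^\T V_{k+1}^x = \gamma\bar\theta_k + \partial\calL/\partial\theta_k$. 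For the Hessian, I would run a backward induction anchored at $V_K^{xx} = \partial^2\calL/\partial x_K^2$ (which follows directly from the terminal condition $V_K = \calL$): assuming $V_{k+1}^{xx} = \partial^2\calL/\partial x_{k+1}^2$, the $\Qxx$ formula in \eqref{Q2nd} combined with $V_k^{xx} = \Qxx$ reproduces the second-order chain rule and yields $V_k^{xx} = \partial^2\calL/\partial x_k^2$. Plugging this identity into the $\Quu$ expression in \eqref{Q2nd} then identifies $\Quu$ with $\partial^2\calL/\partial\theta_k^2 + \gamma I$, that is, the Hessian of the regularized loss; hence $\theta_k^\star = \bar\theta_k - (\Quu)^\Inv \Qu$ is exactly the Newton update on $\calL + \tfrac{\gamma}{2}\norm{\theta_k}^2$.

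The gradient-descent reduction is then immediate: setting additionally $\Quu = I$ removes the Hessian inverse and leaves $\theta_k^\star = \bar\theta_k - (\partial\calL/\partial\theta_k + \gamma\bar\theta_k)$, which coincides verbatim with the regularized gradient-descent rule already isolated in Theorem~\ref{thm:1}. The main subtlety I expect is the backward induction showing that $V_k^{xx}$ matches the true Hessian $\partial^2\calL/\partial x_k^2$ once $\Qxu = 0$; this collapses cleanly to the second-order chain rule only if the $\fxx$, $\fuu$, $\fxu$ tensors appearing in \eqref{Q2nd} are retained, whereas the paper otherwise drops them for stability. I would therefore flag explicitly that, under the paper's default convention, $V_k^{xx}$ and $\Quu$ become the Gauss--Newton analogues of the exact Hessian, so ``Newton's method'' in the corollary should be read in whichever variant of \eqref{Q2nd} is in force.
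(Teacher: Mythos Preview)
Your proposal is correct and follows essentially the same route as the paper's proof: both zero out $\Qxu$ in \eqref{opt-u} and \eqref{Vx-xx}, run a backward induction to identify $V_k^x$ and $V_k^{xx}$ with the true first and second derivatives of $\calL$ in $x_k$, and then read off $\Qu$ and $\Quu$ as the gradient and Hessian of the regularized objective. The only minor slip is citing Theorem~\ref{thm:1} for the $V_k^x$ identity---that theorem carries the extra hypotheses $\Quu\coloneqq I$ and $\Qxx\coloneqq 0$, so the paper instead re-derives $V_k^x = \fx^\T V_{k+1}^x$ directly from \eqref{Vx} with $\Qxu=0$; your flag about retaining the second-order $f$-terms is on point, since the paper does keep them here so that the exact Newton Hessian (not the Gauss--Newton surrogate) is recovered.
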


\subsection{Outer-Product Factorization}

\noindent
While the DDP algorithm is theoretically applicable for training DNNs, its practical use faces challenges due to the computational complexity in propagating the second-order matrices in (\ref{Q2nd},\ref{Vxx}), particularly concerning the high dimensionality of the parameter $\theta_k$. Recognizing that these computations depend only on the terminal Hessian $V_K^{xx} \coloneqq 
\fracpartial{^2\calL}{x_K\partial x_K}$, \ie the Hessian of $\calL$, we explore a low-rank approximation of this matrix. The subsequent result illustrates how this low-rank factorization persists during backward propagation.
\begin{restatable}{proposition}{propthree}\label{prop:3}
  Let the terminal Hessian be approximated by $V_K^{xx} \approx yy^\T$.
  Then, it holds for all $k$ up to second-order that
  \begin{equation}
    \label{Qxu-fac}
    \Qxx = q_k q_k^\T, \quad \Qxu = q_k p_k^\T,
  \end{equation}
  where $q_k, p_k$ can be computed backward from $r_K = y$,
  \begin{equation}
    \label{qpr}
    q_k = \fx^\T r_{k+1}, ~ p_k = \fu^\T r_{k+1},
    ~ r_k = \sqrt{ 1 {-} p_k^\T \pr{\Quu}^{\dagger} p_k  } q_k.
  \end{equation}
\end{restatable}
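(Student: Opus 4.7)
The plan is a backward induction on $k$ with inductive hypothesis that $V_{k+1}^{xx}$ admits a rank-one outer-product factorization $V_{k+1}^{xx} = r_{k+1} r_{k+1}^\T$. The base case $k=K$ is immediate from the assumption $V_K^{xx} \approx y y^\T$ with $r_K = y$, so the work lies entirely in the inductive step.

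For the inductive step, I would substitute the hypothesis $V_{k+1}^{xx} = r_{k+1} r_{k+1}^\T$ into the expressions in \eqref{Q2nd}, dropping the $\fxx \Vx$ and $\fxu \Vx$ contributions as per the conventional omission of second-order derivatives of $f_k$ noted just below that equation. This yields
\begin{equation*}
  \Qxx \approx \fx^\T r_{k+1} r_{k+1}^\T \fx = (\fx^\T r_{k+1})(\fx^\T r_{k+1})^\T,
\end{equation*}
and identically for $\Qxu = (\fx^\T r_{k+1})(\fu^\T r_{k+1})^\T$. Defining $q_k \coloneqq \fx^\T r_{k+1}$ and $p_k \coloneqq \fu^\T r_{k+1}$ immediately reproduces the factorization \eqref{Qxu-fac} and the two recursions for $q_k, p_k$.

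To close the induction, I would then feed these factored forms into the value-function update \eqref{Vxx}. Because $p_k^\T (\Quu)^\dagger p_k$ is a scalar, it factors out cleanly:
\begin{equation*}
  V_k^{xx} = q_k q_k^\T - q_k p_k^\T (\Quu)^\dagger p_k q_k^\T = \bigl(1 - p_k^\T (\Quu)^\dagger p_k\bigr)\, q_k q_k^\T.
\end{equation*}
Setting $r_k \coloneqq \sqrt{1 - p_k^\T (\Quu)^\dagger p_k}\, q_k$ then gives $V_k^{xx} = r_k r_k^\T$, matching the stated recursion in \eqref{qpr} and restoring the inductive hypothesis at layer $k$.

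The main obstacle, and the point I would spend the most care on, is legitimating the square root: one must check that the scalar $1 - p_k^\T (\Quu)^\dagger p_k$ is non-negative. I would argue this from the structure of $\Quu$ in \eqref{Q2nd}: after the $\fuu \Vx$ term is dropped, $\Quu = \fu^\T \Vxx \fu + \luu I = p_k p_k^\T + \gamma I$. With $\gamma > 0$ the matrix is strictly positive definite and a direct application of the Sherman-Morrison identity to $(p_k p_k^\T + \gamma I)^{-1}$ shows $p_k^\T (\Quu)^{-1} p_k = \tfrac{\|p_k\|^2}{\gamma + \|p_k\|^2} \in [0,1)$, so the radicand is strictly positive; the pseudo-inverse limit $\gamma \to 0^+$ yields the boundary value $0$ and is handled analogously via the SVD definition of $\dagger$ from the notation section. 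Beyond this non-negativity check, the argument is a clean algebraic induction.
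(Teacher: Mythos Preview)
Your proposal is correct and follows essentially the same backward-induction argument as the paper: the paper packages the induction on $V_k^{xx} = r_k r_k^\T$ into a separate lemma and then reads off \eqref{Qxu-fac}, but the substance is identical to what you wrote. Your additional care in verifying non-negativity of the radicand via Sherman--Morrison on $p_k p_k^\T + \gamma I$ goes beyond what the paper checks explicitly and is a welcome addition.
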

Proposition~\ref{prop:3} indicates that the outer-product factorization applied at the final layer is amenable to backward propagation through all preceding layers.
Consequently, in comparison to the original computations (\ref{Q2nd},\ref{Vxx}), efficient computation of second-order matrices, such as $\Qxx$, $\Qxu$, $V^{xx}_k$, and even the feedback update \eqref{opt-u}, can be achieved through auto-differentiation and vector-Jacobian products.
It is noteworthy that although extending to higher-rank factorization is feasible and discussed in Appendix \ref{sec:prop3-ext}, the practical utility of low-rank structures is evident from empirical observations \cite{nar2019cross, lezama2018ole}. We also find empirically that this approximation performs sufficiently well and is easily applicable at scale.

While Proposition~\ref{prop:3} can be applied with any $y$, in practice, we adopt the Gauss-Newton approximation, $y \coloneqq \beta \fracpartial{}{x_K}\calL(x_K)$ with a tunable Gauss-Newton factor $\beta \in (0,1]$, for its relation to the Fisher information matrix and practical successes of natural gradient methods in second-order training of DNNs \cite{botev2017practical,george2018fast,martens2014new}.
This further reduces the computation:
\begin{restatable}{corollary}{corofour}\label{coro:4}
  When $y \coloneqq \beta \fracpartial{}{x_K}\calL(x_K)$ and $\gamma {\coloneqq} 0$, it holds that
  \begin{equation*}
    V_k^x = \prod_{n = k}^{K-1} \pr{\frac{\alpha_{n}}{\alpha_{n+1}}}^2  \fracpartial{\calL}{x_k},
    \quad
    V_k^{xx} = \alpha_k^2 \fracpartial{\calL}{x_k}\fracpartial{\calL}{x_k}^\T,
  \end{equation*}
  where $\alpha_k \in \R$ can be computed backward from $\alpha_K = \beta$,
  \begin{equation*}
    \alpha_k = \sqrt{1 - \alpha_{k+1}^2 \fracpartial{\calL}{x_{k+1}}^\T \fu \pr{\Quu}^\dagger \fu^\T \fracpartial{\calL}{x_{k+1}}  } ~ \alpha_{k+1}.
  \end{equation*}
\end{restatable}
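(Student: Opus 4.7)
The plan is to prove Corollary~\ref{coro:4} by a backward induction on $k$, using Proposition~\ref{prop:3} to leverage the outer-product factorization and then propagating through the value-function recursions (\ref{Vx}--\ref{Vxx}). The central claim I will establish first is the stronger inductive hypothesis that $r_k = \alpha_k \fracpartial{\calL}{x_k}$, from which both conclusions will follow almost immediately.

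For the base case, at $k=K$ we have $r_K = y = \beta \fracpartial{\calL}{x_K}$ by definition, matching $\alpha_K = \beta$. For the inductive step, assuming $r_{k+1} = \alpha_{k+1} \fracpartial{\calL}{x_{k+1}}$, I would apply the recursions in \eqref{qpr}: by Backpropagation \eqref{bp} we get $q_k = \fx^\T r_{k+1} = \alpha_{k+1}\, \fx^\T \fracpartial{\calL}{x_{k+1}} = \alpha_{k+1}\fracpartial{\calL}{x_k}$, while $p_k = \alpha_{k+1}\, \fu^\T \fracpartial{\calL}{x_{k+1}}$. Substituting these into $r_k = \sqrt{1 - p_k^\T (\Quu)^\dagger p_k}\; q_k$ and pulling the scalar $\alpha_{k+1}^2$ inside the square root reproduces exactly the $\alpha_k$ recursion stated in the corollary, closing the induction.

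With this established, the Hessian claim follows from \eqref{Vxx} combined with Proposition~\ref{prop:3}:
\begin{equation*}
V_k^{xx} = \Qxx - \Qxu (\Quu)^\dagger \Qux = \bigl(1 - p_k^\T (\Quu)^\dagger p_k\bigr)\, q_k q_k^\T = \left(\frac{\alpha_k}{\alpha_{k+1}}\right)^2 \alpha_{k+1}^2 \fracpartial{\calL}{x_k}\fracpartial{\calL}{x_k}^\T,
\end{equation*}
which simplifies to $\alpha_k^2 \fracpartial{\calL}{x_k}\fracpartial{\calL}{x_k}^\T$. For the first-order piece, I would run a parallel induction showing $V_k^x = c_k \fracpartial{\calL}{x_k}$ with $c_K = 1$. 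Using \eqref{Vx}, the fact that $\gamma = 0$ kills the $\lu$ term in \eqref{Qx-Qu}, so $\Qu = c_{k+1}\fu^\T\fracpartial{\calL}{x_{k+1}}$ and $\Qxu\,(\Quu)^\dagger \Qu = c_{k+1}\,\alpha_{k+1}^2 \bigl(\fracpartial{\calL}{x_{k+1}}^\T \fu(\Quu)^\dagger \fu^\T \fracpartial{\calL}{x_{k+1}}\bigr)\fracpartial{\calL}{x_k}$; combining with $\Qx = c_{k+1}\fracpartial{\calL}{x_k}$ and invoking the $\alpha_k$ recursion again gives $c_k = (\alpha_k/\alpha_{k+1})^2\, c_{k+1}$, which telescopes to the product formula.

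I expect the only subtle step to be the bookkeeping that cleanly separates scalar factors ($\alpha_k$, $c_k$) from the shared gradient direction $\fracpartial{\calL}{x_k}$; once the induction hypothesis is stated in the right form ($r_k$ being collinear with $\fracpartial{\calL}{x_k}$), everything else is algebraic manipulation enabled by Proposition~\ref{prop:3}. No new assumptions are needed beyond those already in the corollary, and the Backpropagation identity $\fx^\T \fracpartial{\calL}{x_{k+1}} = \fracpartial{\calL}{x_k}$ from Theorem~\ref{thm:1} is the only external fact invoked.
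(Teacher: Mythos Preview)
Your proposal is correct and follows essentially the same approach as the paper: both arguments establish $r_k = \alpha_k\,\fracpartial{\calL}{x_k}$ by backward induction via Proposition~\ref{prop:3}/Lemma~\ref{lm:6}, then introduce a scalar ansatz $V_k^x = c_k\,\fracpartial{\calL}{x_k}$ (the paper calls it $\bar\alpha_k$) and use the $\alpha_k$ recursion to identify $c_k = (\alpha_k/\alpha_{k+1})^2 c_{k+1}$, which telescopes to the product formula. The only differences are notational.
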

Corollary~\ref{coro:4} reveals another intriguing connection between the DDP algorithm and Backpropagation, wherein the derivatives of the value function $V_k^x$ and $V_k^{xx}$ are effectively the rescaled gradients $\frac{\partial \mathcal{L}}{\partial x_k}$.
Computation of the rescaling factor $\alpha_k$ is non-trivial, relying on the gradient from the subsequent layer $\fracpartial{\calL}{x_{k+1}}$ and the network architecture via $\fu \pr{\Quu}^\dagger \fu^\T$.
Notably, unlike Theorem~\ref{thm:1} and Corollary~\ref{coro:2}, all second-order matrices persist in Corollary~\ref{coro:4}; hence the parameters are updated by layer-wise feedback policies \eqref{opt-u}.
From a practical standpoint, the Gauss-Newton approximation simplifies the propagation down to a single scalar, leading to a significant reduction in complexity.

\subsection{Curvature Approximation} \label{sec:4.C}

\noindent
Our remaining challenge amounts to the computation of $\Quu$ and its inversion $\Quu^\Inv$.
Due to the high dimensionality of $\theta_k$ in over-parametrized models such as DNNs,
these large second-order matrices are computationally intractable to solve, thus necessitating approximation.
As demonstrated in Corollary~\ref{coro:2}, a natural choice can be the identity matrix, $\Quu \approx I$.
Importantly, since the feedback gain remains nontrivial, the resulting algorithm does not degenerate to gradient descent.
Instead, Corollary~\ref{coro:2} suggests that it should be regarded as an optimal control theoretic extension to gradient descent.
Similarly, we can also consider diagonal matrices, which then resembles adaptive curvature approximation imposed in methods such as RMSprop \cite{hinton2012neural} and Adam \cite{kingma2015adam},
\begin{equation}
  \label{adap}
  \Quu \approx \diag\pr{\sqrt{\E[\Qu \odot \Qu]} + \epsilon}.
\end{equation}
where $\epsilon > 0$ is a hyper-parameter for numerical stability.

\begin{table}[!t]

\caption{
  Comparison of parameter update rule,
  $\theta_k \leftarrow \bar \theta_k - \eta \cdot M_k^{-1} d_k$,
  among different optimizers. For brevity, we shorthand 
  $\calL^\theta_k \coloneqq  \nicefracpartial{\calL}{\theta_k}$ and
  $\calL^h_{k} \coloneqq \pr{\nicefracpartial{\sigma}{h_k}}^\T \nicefracpartial{\calL}{x_{k+1}}$.
  Notice how our OCNOpt shares similar curvature structures with prior optimizers while adopting the DDP update directions.
  \label{table:update-rule}
}
\centering
\begin{tabular}{r|cc}
  \toprule
  Optimizers & Curvature Approximation $M_k$ & Update $d_k$ \\
  \midrule
  SGD       & $I$
            & $\calL^\theta_k$ \\
  RMSprop   & $\diag\pr{\sqrt{\E[\calL^\theta_k \odot \calL^\theta_k]} +\epsilon}$
            & $\calL^\theta_k$ \\
  KFAC      & $\E{[x_k x_k^\T]} \otimes \E{[\calL^h_{k} \calL^{h~\T}_{k}]} $
            & $\calL^\theta_k$ \\
  \midrule
  DDP & $\Quu$ in \eqref{Q2nd}
            & $\Qu+\Qux\dxt$ \\
  \midrule
  \specialcellr[]{\textbf{OCNOpt} \\ \textbf{(ours)}} &
            $\left\{\begin{array}{c}
              {I }, \\
              {\diag\pr{\sqrt{\E[\Qu \odot \Qu]} +\epsilon}},\\
              {\E\br{x_k x_k^\T} \otimes \E\br{V^h_k V^{h~\T}_k}}\end{array}\right\}$
            & $\Qu+\Qux\dxt$ \\
  \bottomrule
\end{tabular}
\end{table}

Regarding second-order curvature approximation, a prevalent option in the context of training DNNs pertains to Fisher information or, equivalently, Gauss-Newton matrices.
\linebreak[4]
Popular second-order methods, \eg \cite{martens2015optimizing,grosse2016kronecker}, leverage efficient Jacobian-vector computation for fully-connected layers
$f_k(x_k, \theta_k) {=} \sigma(h_k) {=} \sigma(W_k x_k {+} b_k)$, simplifying, \eg  \eqref{Qx-Qu}, to
\begin{equation*}
\textstyle
  \fu^\T \Vx = \pr{\fracpartial{\sigma}{h_k} \fracpartial{h_k}{\theta_k}}^\T \Vx 
  = x_k \otimes V^h_k,
  \,\, V^h_k := {\fracpartial{\sigma}{h_k}}^\T \Vx.
\end{equation*}
This leads to an efficient factorization of the Gauss-Newton matrix
(assuming $\gamma \coloneqq 0$ for now):
\begin{align*}
  \E\br{ \Qu \Qu^\T} 
  &= \E\br{ \pr{x_k \otimes V^h_k} \pr{x_k \otimes V^h_k}^\T } \\
  &= \E\br{ \pr{x_k x_k^\T} \otimes \pr{V^h_k {V^h_k}^\T} } \\
  &\approx \E\br{x_k x_k^\T} \otimes \E\br{V^h_k {V^h_k}^\T}, \numberthis \label{kfac}
\end{align*}
where the expectation is taken w.r.t. the mini-batch and we invoke the mixed-product property of the Kronecker product in the second line.
The Kronecker factorization in \eqref{kfac} enables an efficient computation for the preconditioned matrix $\Quu^\dagger$ by applying singular-value decompositions, $\E[x_k x_k^\T] = U_1 \Sigma_1 U_1^\top$ and
$\E[V^h_k {V^h_k}^\T] = U_2 \Sigma_2 U_2^\top$, and then computing the pseudo inversion \cite{george2018fast} with $\gamma \ge 0$ via 
(see Appendix~\ref{sec:approx-explain} for details):
\begin{align*}
    \pr{\Quu}^\dagger
    &\approx \textstyle \frac{1}{\gamma} I + \pr{\E\br{x_k x_k^\T} \otimes \E\br{V^h_k {V^h_k}^\T}}^\dagger \\
     &= (U_1 \otimes U_2) (\textstyle \frac{1}{\gamma} I + \Sigma_1^\dagger \otimes \Sigma_2^\dagger) (U_1^\T \otimes U_2^\T). \numberthis \label{kfac-u}
\end{align*}
Effectively, we break down the large matrix inversion into sequence of Kronecker-based computation on much smaller matrices.
The preconditioned update $\pr{\Quu}^\dagger \Qu$ can then be efficiently computed by invoking the formula: $(A \otimes B) \vectorize (C) = \vectorize (B C A^\T)$ for matrices $A,B,C$ of proper sizes.

To summarize, our proposed algorithm, termed Optimal Control Theoretic Neural Optimizer (OCNOpt), shares a similar algorithmic backbone with the DDP algorithm but undergoes substantial modifications to address practical training considerations.
By adopting a curvature approximation based on successful methods from other optimizers, OCNOpt not only benefits from the wisdom of proven techniques but, as indicated by Corollary~\ref{coro:2}, should also be viewed as an optimal control theoretic extension to these methods. Table~\ref{table:update-rule} highlights the distinctions between various classes of optimizers.

\subsection{Extension to General Network Architectures} \label{sec:4.D}

\noindent
Our derivation so far has been tailored to the feedforward propagation rule \eqref{ff}, which, despite its prevalence, is merely one of many alternatives. Considering that modern network architectures often entail intricate dependencies between layers, establishing a general recipe for extending the aforementioned optimal control theoretic optimizers to accommodate modern architectures is a crucial step to enhance their applicability.

Oftentimes, such generalization involves redefining the propagation rule \eqref{ff} and, from which, constructing proper Bellman objectives such that their first-order expansions align with the Backpropagation processes, similar to Theorem~\ref{thm:1}.
Take the residual block in Fig.~\ref{fig:resnet} for instance.
The skip connection can be made Markovian as in \eqref{ff} via $x_k \coloneqq [z_k, z_{k-1}]^\T$, $x_{k+1} := [z_{k+1}, z_{k,\text{res}}]^\T$, and (see Appendix \ref{sec:res-app} for details)
\begin{align}
  \label{F}
  x_{k+1} = F_k(x_k, \theta_k, \phi_k) 
  =
  \begin{bmatrix}
    f_k(z_k, \theta_k) \\
    f_k(z_{k-1}, \phi_k)
  \end{bmatrix}.
\end{align}
This leads to a Bellman objective with an additional input $\phi_k$,
\begin{align*}
  Q_k(x_k, \theta_k, \phi_k) \coloneqq \ell(\theta_k) + \ell(\phi_k) + V_{k+1}(F_k(x_k, \theta_k, \phi_k)),
\end{align*}
which coincides with the optimality condition appearing in \emph{dynamic games}---a discipline of interactive decision-making building upon optimal control and game theory \cite{yeung2006cooperative,petrosjan2005cooperative}.
In this vein, each layer acts like a player in a dynamic game connected through the network propagation, and
the (p)layers interact collectively to solve the same end goal $\calL(x_K)$.

\begin{figure}[!t]
  \centering
  \includegraphics[width=.9\linewidth]{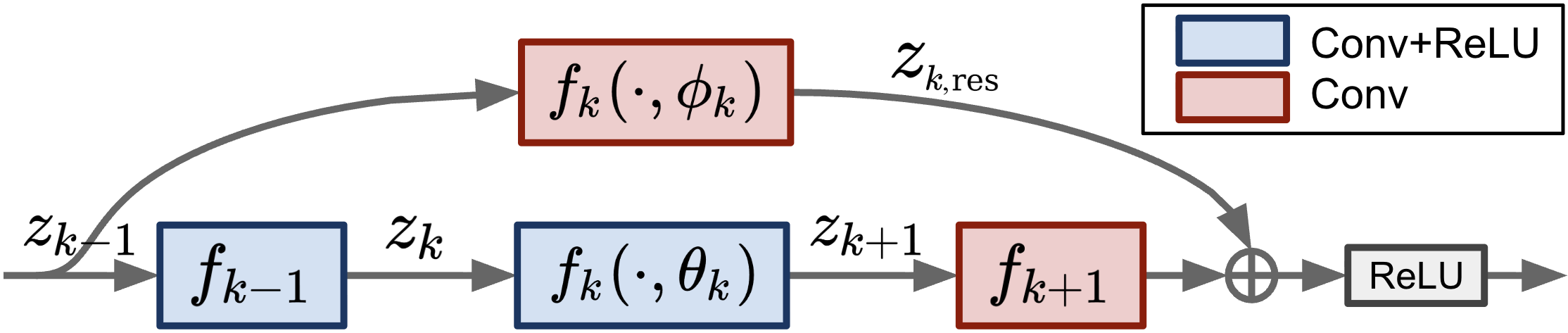}
  \caption{Example of a residual block as $F_k(x_k, \theta_k, \phi_k)$ in \eqref{F}.}
  \label{fig:resnet}
\end{figure}

Algorithmically, similar parameter feedback updates can be derived by either freezing $\bar\phi_k$ and expanding $Q_k(x_k, \theta_k; \bar\phi_k)$ as in \eqref{Q-full-expand}, or redefining $\tilde \theta_k \coloneqq [\theta_k, \phi_k]$ and instead expanding $Q_k(x_k, \tilde \theta_k)$. The expansions correspond respectively to non-cooperative and cooperative dynamic games, showcasing the rich algorithmic opportunities introduced by the game-theoretic perspective.
It is noteworthy that the construction of \eqref{F} is \emph{not} unique, as the skip connection layer $f(\cdot, \phi)$ in Fig.~\ref{fig:resnet} may also be placed to construct $F_{k-1}$ or $F_{k+1}$. We will discuss its game-theoretic implications in Sec.~\ref{sec:game}.

When the skip connection layer is an identity map, \eqref{F} can be rewritten as $z_{k+2} = z_{k-1} + \epsilon \cdot (f_{k+1} \circ f_k \circ f_{k-1})(z_{k-1})$ with $\epsilon = 1$. 
In the continuous-time limit ($\epsilon \rightarrow 0$), this leads to a crucial extension 
where the propagation dynamics become a Neural Ordinary Differential Equation (ODE) \cite{chen2018neural}:
\begin{align}
  \label{node}
  \fracdiff{}{t} x_t = F(t, x_t, \theta),
\end{align}
where $t \in [0,T]$ is the continuous time index.
Typically, $F(t, x_t, \theta)$ is represented by a feedforward network
with $(t,x_t)$ as its inputs, and $\theta$ gathering parameters across all layers.
Computing the gradients $\fracpartial{\calL(x_T)}{\theta}$
involves solving the continuous-time analogy of the Backpropagation process \eqref{bp},
well-known in OCP literature \cite{pontryagin2018mathematical} as the adjoint equation,
\begin{align}
  \label{bp-cont}
  - \fracdiff{}{t} \pr{\fracpartial{\calL}{x_t}} = {\fracpartial{F}{x_t}}^\T \fracpartial{\calL}{x_t}.
\end{align}

\begin{algorithm}[t]
    \caption{\textbf{O}ptimal \textbf{C}ontrol Theoretic \textbf{N}eural \textbf{Opt}imizer}
    \label{alg}
    \begin{algorithmic}[1]
        \REQUIRE Deep neural network (DNN),
                Gauss-Newton factor $\beta \in (0,1]$,
                learning rate $\eta > 0$
        \REPEAT
            \STATE Forward propagate $x_k, \forall k \in [K]$, via \eqref{ff} or \eqref{F}
            \STATE Set $V_K^x \coloneqq \fracpartial{\calL}{x_K}$ and $r_K \coloneqq \beta \fracpartial{\calL}{x_K}$
            \FOR{$k=K-1$ {\bfseries to} $0$}
                \STATE Compute $\Qx, \Qu, q_k, p_k$ via (\ref{Qx-Qu},\ref{qpr})
                \STATE Estimate curvature $\Quu$ with any option in Table~\ref{table:update-rule}
                \STATE Invoke (\ref{opt-u},\ref{Qxu-fac}) and compute the feedback policy $\delta \theta_k^\star$
                \STATE Compute $V_k^x$ and $r_k$ via (\ref{Vx},\ref{qpr})
            \ENDFOR
            \FOR{$k=0$ {\bfseries to} $k-1$}
                \STATE Compute the differential state $\dxt$ via \eqref{dxk}
                \STATE Update layer parameter $\theta_k \leftarrow \bar \theta_k + \eta \cdot \delta \theta_k^\star(\dxt)$
            \ENDFOR
        \UNTIL{converges}
    \end{algorithmic}
\end{algorithm}

Equations~(\ref{node},\ref{bp-cont}) model a continuous-time Optimal Control Problem (OCP). The corresponding Bellman equation is formulated as $V_0(x) = \min_\theta Q_0(x, \theta)$, where the continuous-time Bellman objective $Q_t(x_t, \theta)$ is defined as:
\begin{align}
  \label{Q-cont}
 Q_t(x_t, \theta) {\coloneqq} {\int_t^T} \ell(\theta) \dt + \calL\pr{x_t + {\int_t^T} F(\tau, x_\tau,\theta) \rd \tau}.
\end{align}
With \eqref{Q-cont}, the subsequent derivation involves a straightforward process of performing a second-order expansion on $Q_t(x_t, \theta)$, deriving the backward dynamics of the resulting second-order derivatives, and finally employing the Gauss-Newton approximation.
We consolidate the final results below and direct readers to Appendix~\ref{sec:proof} for a detailed derivation.
\begin{restatable}{proposition}{propfive}\label{prop:5}
  Let %
  $Q_T^{xx} \approx \fracpartial{\calL}{x_T} \fracpartial{\calL}{x_T}^\T$
  and $\ell(\theta) \coloneqq \frac{\gamma}{2}\|\theta\|^2$.
  Then, the second-order matrices can be factorized into
  \begin{align}
    \label{Quu-cont}
    Q_0^{xx} = q_0 q_0^\T, ~~ Q_0^{x\theta} = q_0 p_0^\T, ~~
    Q_0^{\theta\theta} = p_0 p_0^\T + \gamma T I,
  \end{align}
  where $q_0, p_0$ can be computed from the following backward ODEs with the boundary condition $(q_T, p_T) \coloneqq (\fracpartial{\calL}{x_T}, 0)$,
  \begin{align}
    \label{qt-cont}
    - \fracdiff{}{t} q_t = {\Fx}^\T q_t, \quad - \fracdiff{}{t} p_t = {\Fu}^\T q_t.
  \end{align}
\end{restatable}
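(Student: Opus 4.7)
The plan is to follow the three-step recipe the authors sketch: perform a second-order expansion of $Q_t$ from \eqref{Q-cont}, drop the contributions from $F$'s second derivatives (in line with the convention already used in \eqref{Q2nd}), and substitute the terminal Gauss-Newton approximation $Q_T^{xx}\approx yy^\T$ with $y\coloneqq\partial\calL/\partial x_T$ to collapse the resulting Hessian blocks into rank-one form. The quantities $q_t, p_t$ will play the role that the layer-wise vectors $q_k, p_k$ play in Proposition~\ref{prop:3}, but now satisfy backward ODEs rather than backward recursions.

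Concretely, let $\Phi_t(x_t,\theta)$ denote the flow of \eqref{node} from time $t$ to $T$, so that $Q_t(x_t,\theta)=(T-t)\,\ell(\theta)+\calL(\Phi_t(x_t,\theta))$. Writing $\Phi_t^x\coloneqq\partial\Phi_t/\partial x_t$ and $\Phi_t^\theta\coloneqq\partial\Phi_t/\partial\theta$, a chain-rule computation, together with dropping terms containing $F^{xx},F^{x\theta},F^{\theta\theta}$, yields
\begin{align*}
Q_t^{xx}&\approx (\Phi_t^x)^\T\,\calL^{xx}\,\Phi_t^x,\\
Q_t^{x\theta}&\approx (\Phi_t^x)^\T\,\calL^{xx}\,\Phi_t^\theta,\\
Q_t^{\theta\theta}&\approx (\Phi_t^\theta)^\T\,\calL^{xx}\,\Phi_t^\theta+(T-t)\gamma I,
\end{align*}
where the $(T-t)\gamma I$ term arises solely from the running cost $\int_t^T\ell(\theta)\,\rd\tau$. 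Substituting $\calL^{xx}\approx yy^\T$ and defining $q_t\coloneqq(\Phi_t^x)^\T y$, $p_t\coloneqq(\Phi_t^\theta)^\T y$ gives, at $t=0$, exactly the factorization in \eqref{Quu-cont}.

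What remains is to verify that $q_t,p_t$ obey \eqref{qt-cont}. Let $\Psi(s,t)$ denote the fundamental matrix of the linearized dynamics so that $\Phi_t^x=\Psi(t,T)$. The semigroup identity $\Psi(t,T)\,\Psi(0,t)=\Psi(0,T)$, combined with the forward equation $\partial_t\Psi(0,t)=F^x(t)\Psi(0,t)$, yields $\partial_t\Psi(t,T)=-\Psi(t,T)F^x(t)$; transposing gives $-\dot q_t=F^{x\,\T} q_t$ with terminal condition $q_T=y$. For $p_t$, the inhomogeneous variational equation for $\partial x_\tau/\partial\theta$ admits the Duhamel representation $\Phi_t^\theta=\int_t^T\Psi(\tau,T)\,F^\theta(\tau)\,\rd\tau$, so $p_t=\int_t^T F^{\theta\,\T}(\tau)\,q_\tau\,\rd\tau$; differentiating under the integral yields $-\dot p_t=F^{\theta\,\T} q_t$ with $p_T=0$, matching \eqref{qt-cont}.

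The main obstacle I anticipate is careful time-direction bookkeeping rather than any deep analytic difficulty: because $\Phi_t$ is indexed by the \emph{initial} time $t$ with the terminal time fixed at $T$, composing flows reverses the usual ordering and it is easy to misplace a sign when differentiating $\Psi(t,T)$ in $t$. Beyond this, the argument is a continuous-time mirror of Proposition~\ref{prop:3}: Jacobian-transpose--vector products become adjoint ODE integrations, and the $\gamma T I$ correction to $Q_0^{\theta\theta}$ emerges naturally from the running cost rather than from the dynamics.
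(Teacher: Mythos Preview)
Your argument is correct and reaches the same conclusion as the paper, but the route differs. The paper proceeds ``differentially'': it takes $-\fracdiff{}{t}Q_t=\ell(\theta)$, inserts the quadratic expansion in $(\delta x_t,\delta\theta)$ with the first-order differential dynamics $\fracdiff{}{t}\delta x_t=F_t^{x}\delta x_t+F_t^{\theta}\delta\theta$, and matches coefficients to obtain matrix ODEs for $Q_t^{xx},Q_t^{x\theta},Q_t^{\theta\theta}$; it then \emph{verifies} that the rank-one ansatz $q_tq_t^\T,\;q_tp_t^\T,\;p_tp_t^\T+\gamma(T-t)I$ solves those ODEs whenever $q_t,p_t$ obey \eqref{qt-cont}. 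You instead work through the flow map $\Phi_t$, so the rank-one structure is \emph{immediate} once $\calL^{xx}\approx yy^\T$ is inserted, and the backward ODEs for $q_t,p_t$ fall out of the first variational equation and Duhamel's formula. Your approach is more constructive and makes explicit why dropping $F^{xx},F^{x\theta},F^{\theta\theta}$ exactly kills the second flow derivatives (hence the Gauss--Newton blocks are exact for the linearized flow); the paper's approach has the advantage that the matrix ODEs \eqref{eq:dQ} are derived once and for all, so the rank-one verification is a short product-rule check without needing the semigroup/Duhamel machinery. Both are valid and use the same approximations.
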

Proposition~\ref{prop:5} is the continuous-time extension of Corollary~\ref{coro:4}. 
In practice, computing the inversion $Q_0^{\theta\theta\Inv}$ is still prohibitively expensive. 
Since $F$ itself is a DNN, $F(t,x_t,\theta) := (f_K \circ \cdots \circ f_1)(t,x_t)$ where $f_k := f_k(\cdot, \theta_k)$, 
we can decompose $p_t = [\cdots, p_{t,k}, \cdots]^\T$ layer-wise. 
The evolution of $p_{t,k}$ follows 
an ODE whose vector field can be factorized similar to \eqref{kfac},
\begin{align*}
    -\fracdiff{}{t} p_{t,k} = {\fracpartial{F}{\theta_k}}^\T q_t = z_{t,k} \otimes h_{t,k}, \quad
    h_{t,k} := \fracpartial{F}{z_{t,k+1}}^\T q_t,
\end{align*}
where $z_{t,k}$ is the input of the layer $f_k$ at time $t$.
We can thus approximate
the layer-wise preconditioning matrix via
\begin{align}\label{eq:ekfac-snopt}
    Q^{\theta\theta}_{0,k} =
    p_{0,k} p_{0,k}^\T 
    &\approx \int^0_T \pr{z_{t,k} z_{t,k}^\T} \dt \otimes \int^0_T \pr{h_{t,k+1} h_{t,k+1}^\T} \dt.
\end{align}
This results in an efficient second-order optimization method for training Neural ODEs,
a task that is otherwise challenging without the introduction of OCP and Bellman principle.

Algorithms~\ref{alg} and \ref{alg:cont} provide pseudo codes summarizing our OCNOpt for training standard DNNs or Neural ODEs. The procedures share a similar backbone with previous optimizers, iteratively performing: 
{(1)} forward propagation of the neural network dynamics,
{(2)} backward computation of the approximate Bellman objective, and
{(3)} layer-wise parameter updates.

\begin{algorithm}[t]
    \caption{\textbf{OCNOpt} for training Neural ODEs}
    \label{alg:cont}
    \begin{algorithmic}[1]
        \REQUIRE Feedforward DNN $F(\cdot,\cdot,\theta)$
                where $\theta \coloneqq \{ \theta_k \}_{k=0}^K$ gathers the layer-wise parameter $\theta_k$,
                learning rate $\eta > 0$ %
        \REPEAT
            \STATE Forward propagate $x_t$ via the forward ODE \eqref{node}
            \STATE Set $(q_T, p_T) \coloneqq (\fracpartial{\calL}{x_T}, 0)$
            \STATE Solve $q_0$ and $p_0$ via the backward ODEs \eqref{qt-cont}
            \STATE Compute layer-wise curvature $Q_{0,k}^{\theta\theta}$ via \eqref{eq:ekfac-snopt}
            \STATE Compute the precondition $\delta \theta_k^\star = - {Q_{0,k}^{\theta\theta}}^\Inv Q_{0,k}^{\theta} $ via \eqref{kfac-u}
            \FOR{$k=0$ {\bfseries to} $k-1$}
                \STATE Update layer parameter $\theta_k \leftarrow \bar \theta_k + \eta \cdot \delta \theta_k^\star$
            \ENDFOR
        \UNTIL{converges}
    \end{algorithmic}
\end{algorithm}

\section{Result}

This section showcases OCNOpt as a competitive optimizer that leverages efficient factorization (Propositions~\ref{prop:3} and~\ref{prop:5}) and curvature approximation (Sec.~\ref{sec:4.C}) for second-order derivatives to achieve computational efficiency and practical training improvements across diverse DNN architectures.
Experimental details and additional results are in Appendices \ref{app:d} and \ref{app:e}.

\subsection{Training DNNs} \label{sec:5.A}

\noindent
We first evaluate the efficacy of our \textbf{OCNOpt} across diverse image classification datasets, chosen for their relevance as the suitable testbeds for a variety of DNN architectures.
The datasets encompass a spectrum of complexities, including WINE, a tabular dataset featuring 3 classes and 13 dimensions, and well-established image classification benchmarks such as DIGITS, MNIST, FMNIST, SVHN, CIFAR10, and CIFAR100.
DIGITS and MNIST datasets comprise grayscale handwritten digits with resolutions of 8$\times$8 and 28$\times$28, respectively.
FMNIST shares the same resolution as MNIST but focuses on fashion-related images.
On the other hand, SVHN, CIFAR10, and CIFAR100 datasets consist of colorful 32$\times$32 images portraying street house numbers and natural objects.
In terms of DNN architectures, our evaluation covers:
\begin{itemize}
  \item Fully-connected networks (FCNs): Comprising 5 fully-connected layers with hidden dimensions ranging from 10 to 32, depending on the dataset size.
  \item Convolutional neural networks (CNNs): Comprising 4 convolution layers (using a 3$\times$3 kernel with 32 channels) followed by 2 fully-connected layers.
  \item Residual-based networks: Consisting of 3 residual blocks, each with a skip connection as illustrated in Fig.~\ref{fig:resnet}. For larger datasets such as CIFAR10 and CIFAR100, we adopt the standard implementation of ResNet18.
  \item Inception-based networks: Featuring an inception block with 4 feature extraction paths of varying kernel sizes. 
  These networks can be likened to a 4-player dynamic game, following our discussion in Sec.~\ref{sec:4.D}.
\end{itemize}

\begin{table*}[!t]
    \caption{Performance comparison on test-time accuracy (\%; higher is better) across various datasets, network architectures, \\ and training optimizers, including standard first and second-order optimizers, OCP-insipred methods, and our \textbf{OCNOpt}.
    \label{tb:ddp-clf}
    }
    \centering
    \begin{tabular}{r!{\vrule width 0.5pt}r!{\vrule width 1pt}cccc!{\vrule width 0.5pt}cc!{\vrule width 0.5pt}c}
    \toprule
    \multirow{2}{*}{Network Type}
    & \multirow{2}{*}{Dataset} & \multicolumn{4}{c|}{Standard Optimizers} & \multicolumn{2}{c|}{OCP Methods} & \multirow{2}{*}{\textbf{ OCNOpt (ours)}} \\ [2pt]
    &                          & SGDm & RMSProp & Adam & EKFAC            & EMSA & vanilla DDP \\
    \midrule
    &{WINE}&
    {94.35} & 98.10 & 98.13 & 94.60 & 93.56 & 98.00 & \textbf{98.18} \\[1pt]
    Fully-
    &{DIGITS}&
    {\textbf{95.36}} & 94.33 & 94.98 & 95.24 & 94.87 & 91.68 & {95.13} \\[1pt]
    Connected
    &{MNIST}&
    {92.65} & 91.89 & 92.54 & 92.73 & 90.24 & 90.42 & \textbf{93.30} \\[1pt]
    &FMNIST&
    {82.49} & 83.87 & 84.36 & 84.12 & 82.04 & 81.98 & \textbf{84.98} \\[1pt]
    \midrule
    &{MNIST}&
    {97.94} & 98.05 & 98.04 & 98.02 & 96.48 & & \textbf{98.09} \\[1pt]
    Convolutional
    &{SVHN}&
    {89.00} & 88.41 & 87.76 & 90.63 & 79.45 & n/a & \textbf{90.70} \\[1pt]
    &{CIFAR10}&
    {71.26} & 70.52 & 70.04 & 71.85 & 61.42 & & \textbf{71.92} \\[1pt]
    \midrule
    \multirow{4}{*}{Residual}
    & {MNIST}     &  98.65  &  98.61 &  98.49  &  \textbf{98.77}  & 98.25 & \multirow{4}{*}{n/a} & {98.76}  \\[1pt]
    & {SVHN}      &  88.58  &  88.96 &  89.20  &  88.75  & 87.40  & & \textbf{89.91} \\[1pt]
    & {CIFAR10}   &  82.94 &  83.75  &  85.66  &  85.65  & {75.60}  & & \textbf{85.85}  \\[1pt]
    & {CIFAR100}  &  71.78  &  71.65 &  71.96  &  71.95  & {62.63} & & \textbf{72.24} \\[1pt]
    \midrule
    & {MNIST}     &  97.96 & 97.75 & 97.72 &  97.90  & 97.39 & & \textbf{98.03}  \\[1pt]
    Inception
    & {SVHN}      &  87.61 &  86.14 &  86.84 &  88.89  & 82.68 & n/a &  \textbf{88.94} \\[1pt]
    & {CIFAR10}   &  76.66 &  74.38 &  75.38 &  77.54  & {70.17} &   & \textbf{77.72}  \\[1pt]
    \bottomrule
    \end{tabular}
\end{table*}

\begin{figure*}[!t]
    \centering
    \includegraphics[width=.95\linewidth]{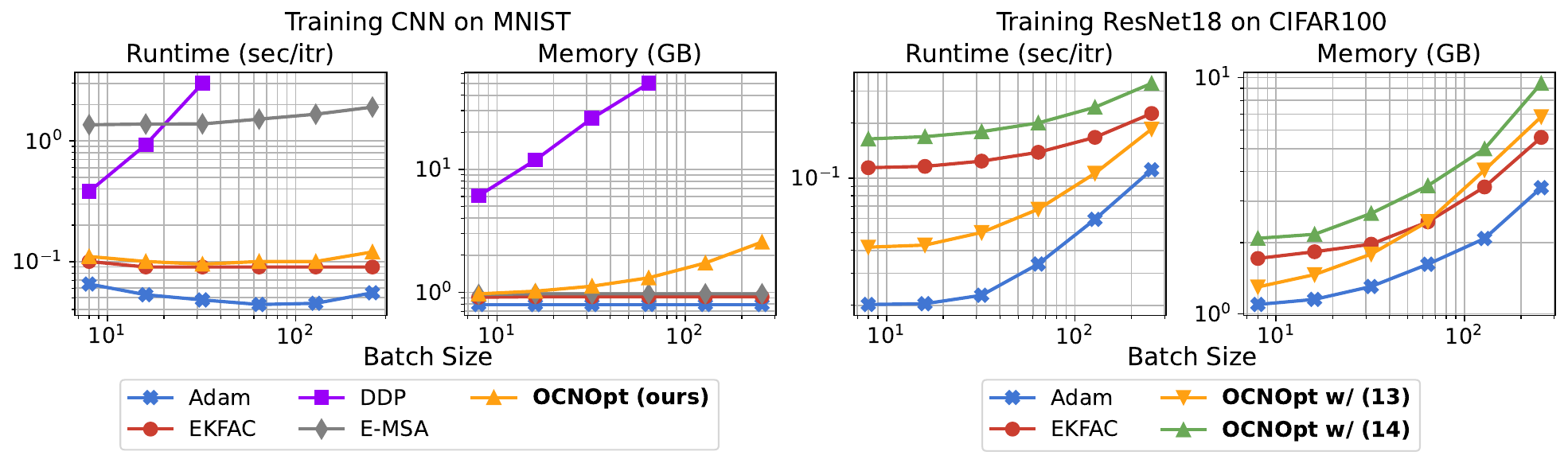}
    \caption{
        Computational complexity between various optimization methods on training (left) a CNN on MNIST dataset and (right) a ResNet18 on CIFAR100 with respect to batch sizes. Notice that all axes use log scale.
    }
    \label{fig:complexity}
\end{figure*}

Our OCNOpt, bridging conventional training methods and the Optimal Control Problem (OCP) principle, is compared against baselines from both domains.
For standard optimizers, we include first-order methods such as SGD (with momentum), RMSprop, and Adam, along with second-order methods such as EKFAC \cite{george2018fast}, sharing a similar computation approach with \eqref{kfac} in eigen-spaces.
For OCP-inspired optimizers, we compare OCNOpt with vanilla DDP (refer to Table~\ref{table:update-rule}) and E-MSA \cite{li2017maximum}, another second-order method except built upon the Pontryagin framework.
The batch size is set to 8-32 for FCNs training and 128 for other architectures.
We implement OCNOpt following Alg.~\ref{alg} and report the best result among various curvature approximations available from Table~\ref{table:update-rule}.
The detailed ablation study is deferred to Sec.~\ref{sec:ablation}.

Table~\ref{tb:ddp-clf} consolidates OCNOpt's performance across various modern DNNs for classification tasks.
Clearly, OCNOpt consistently achieves competitive or superior results compared to standard optimizers across diverse datasets and network architectures.
Importantly, OCNOpt \emph{outperforms} OCP-inspired methods by a significant margin, as these methods often face unstable training dynamics and require meticulous hyper-parameter tuning.
For example, vanilla DDP struggles beyond FCNs' problem sizes.
Despite sharing a similar algorithmic foundation, OCNOpt distinguishes itself from DDP by adapting amortized curvature, which substantially stabilizes the training process.
We highlight this distinction, positioning OCNOpt as a method effectively amalgamating the strengths of both standard and OCP-inspired optimization paradigms.

In Figure~\ref{fig:complexity}, we present the computational complexity in per-iteration runtime and memory consumption w.r.t. different batch sizes.
The results depict the training of CNNs on MNIST and ResNet18 on CIFAR100, chosen to represent small and large networks, respectively.
For smaller networks, OCNOpt exhibits comparable runtime to standard optimizers, with only a minor increase in memory usage attributed to the incorporation of layer-wise feedback policies.
In practice, the differences tend to diminish for larger networks, where the runtime of OCNOpt is approximately $\pm$40\% compared to second-order methods like EKFAC, contingent upon the chosen curvature approximation, \eqref{adap} and \eqref{kfac}.
Notably, OCNOpt consistently outperforms other OCP methods by a large margin, thereby significantly enhancing its practical applicability.

\begin{table*}[!t]
    \caption{
        Evaluation of Neural ODEs across three diverse applications. We report test-time accuracy (\%; higher is better) on image classification and time-series prediction, and negative log-likelihood (NLL; lower is better) on continuous normalizing flow.
        \label{tb:snopt}
    }
    \centering
    \begin{tabular}
        {lccccccccc}
    \toprule
    & \multicolumn{3}{c}{Image Classification} & \multicolumn{3}{c}{Time-Series Prediction} & \multicolumn{3}{c}{Continuous Normalizing Flow} \\[2pt]
    & {MNIST}  & {SVHN} & {CIFAR10}
    & {SpoAD}  & {ArtWR} & {CharT}
    & Circle & {Gas}  & {Miniboone} \\
    \midrule
    Adam          & 98.83          &  91.92          & 77.41          & 94.64          & 84.14          & 93.29
                  & 0.90           & -6.42          &  13.10 \\[2pt]
    SGD           & 98.68          &  93.34          & 76.42          & \textbf{97.70} & 85.82          & 95.93
                  & 0.94           & -4.58          &  13.75 \\[1pt]
    \midrule
    \specialcelll{ \textbf{OCNOpt} \\ (\textbf{ours})}
                  & \textbf{98.99} &  \textbf{95.77} & \textbf{79.11} & 97.41          & \textbf{90.23} & \textbf{96.63}
                  &\textbf{0.86}  & \textbf{-7.55} &  \textbf{12.50} \\
    \bottomrule
    \end{tabular}
\end{table*}

\begin{figure*}[!t]
    \centering
    \subfloat{%
        \includegraphics[width=.7\linewidth]{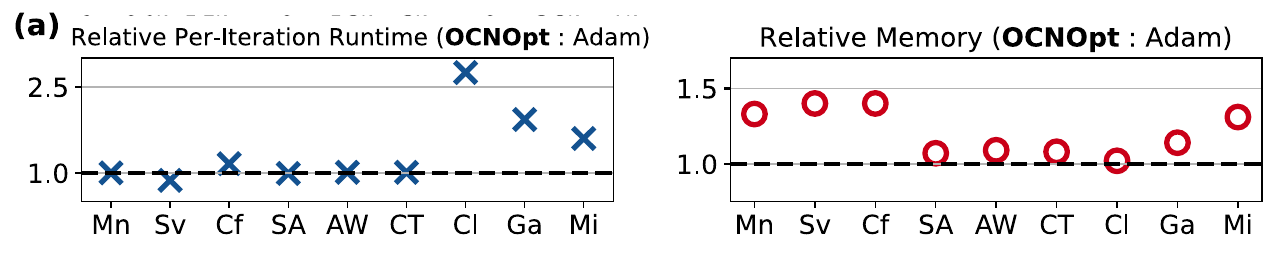}
        \label{fig:convergence-a}
    }\\
    \vskip -0.05in
    \subfloat{%
    \includegraphics[width=.8\linewidth]{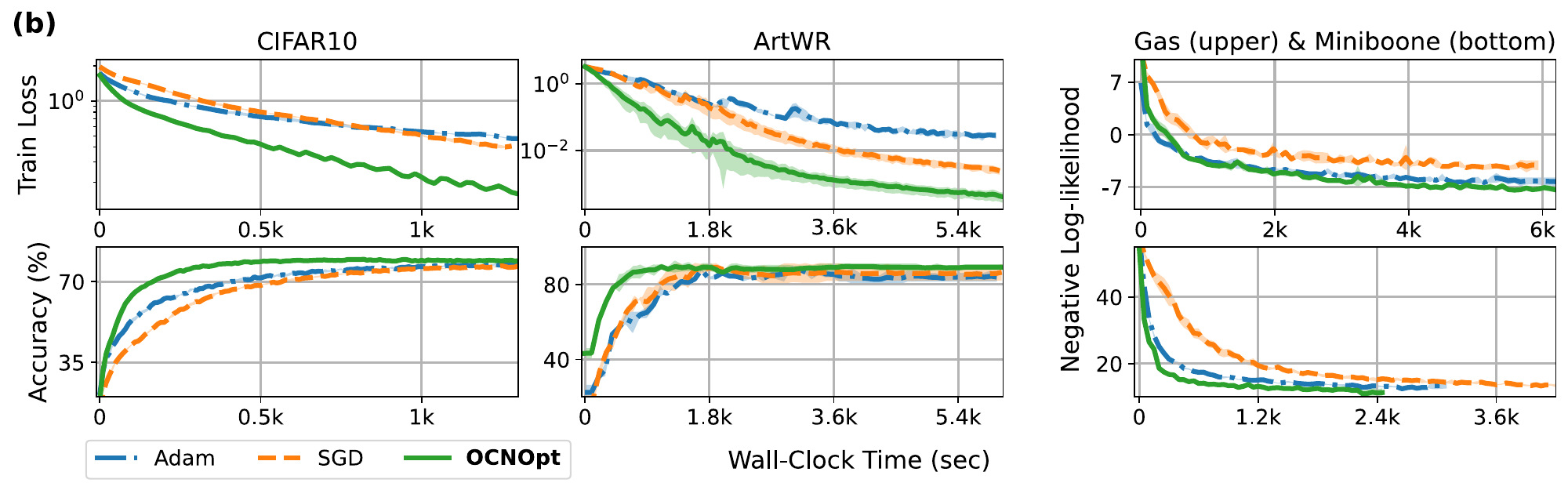}
        \label{fig:convergence-b}%
    }%
    \caption{
        (a) Relative computational complexity of our OCNOpt compared to the first-order method Adam, which are donoted by the black dashed lines. We shorthand, \eg ``Mn'' as MNIST, ``Sv'' as SVHN, and so on.
        (b) Convergence as a function of \emph{wall-clock} runtime between our OCNOpt and other first-order methods, including Adam and SGD.
        Note that the ``k'' in the x-axis abbreviates ``$1000$''.
    }
    \label{fig:convergence}
\end{figure*}

\subsection{Training Neural ODEs} \label{sec:5.B}

\noindent
Next, we proceed to assess OCNOpt's performance in training continuous-time deep architectures, specifically Neural ODEs \cite{chen2018neural}.
The continuous-time nature of these models makes them versatile across various applications beyond classification.
We explore three distinct applications, each comprising three datasets, yielding a total of nine datasets as follows:
\begin{itemize}
  \item Image classification: MNIST, SVHN, CIFAR10, discussed previously in Sec.~\ref{sec:5.A}. All three datasets encompass 10 label classes.
  \item Time-series prediction: Following prior works \cite{kidger2020neural}, we select three datasets from the UEA time series archive \cite{bagnall2018uea}, namely SpokenArabicDigits (SpoAD), Articulary Word Recognition (ArtWR), and Character Trajectories (CharT).
  SpoAD concerns speech data whereas ArtWR and CharT entail data associated with motion.
  \item Continuous normalizing flow (CNF): Involving the transformation of a Gaussian prior into target distributions. We consider the synthetic dataset Circle, reported in \cite{chen2018neural}, and tabular datasets Gas and Miniboone from \cite{grathwohl2019ffjord}.
\end{itemize}

Propagation of the Neural ODEs \eqref{node} involves a numerical integrator, and we employ the widely used Runge-Kutta 4(5) \cite{dormand1980family} on all datasets.
For the actual DNNs $F(t,x_t,\theta)$ in \eqref{node}, we utilize CNNs for image classification and adopt networks from \cite{rubanova2019latent} and \cite{grathwohl2019ffjord} for time-series prediction and CNF, respectively.
The batch size is set to 256, 512, and 1000 for ArtWord, CharTraj, and Gas, respectively, and 128 for the others.
Finally, as there are no second-order methods available for training Neural ODEs, our focus remains primarily on first-order methods as baselines.

Table~\ref{tb:snopt} provides insights into the learning performance for each problem, reporting negative log-likelihood (NLL) for CNF and test-time accuracy (\%) for the remaining datasets.
OCNOpt consistently outperforms standard first-order baselines on almost all datasets, excluding SpoAD.
Notably, these achievements do \emph{not} incur substantial runtime complexity or memory costs, as depicted in Fig.~\ref{fig:convergence}.
On image and time-series datasets, OCNOpt's per-iteration runtime closely aligns with that of Adam, showcasing a significantly faster convergence rate compared to first-order baselines due to its second-order updates.
In the case of CNF, where additional propagation of probability density is required alongside \eqref{node}, OCNOpt is approximately 1.5 to 2.5 times slower but still exhibits faster overall convergence in wall-clock time.
However, due to the extra storage of second-order matrices, OCNOpt generally incurs a 10-40\% increase in memory consumption.
Despite this, the actual rise in memory remains within 1GB for all datasets, thus remaining feasible on standard GPU machines.

\section{Discussion}

\subsection{Ablation Study} \label{sec:ablation}
\begin{figure*}[!t]
  \centering
  \includegraphics[width=.8\linewidth]{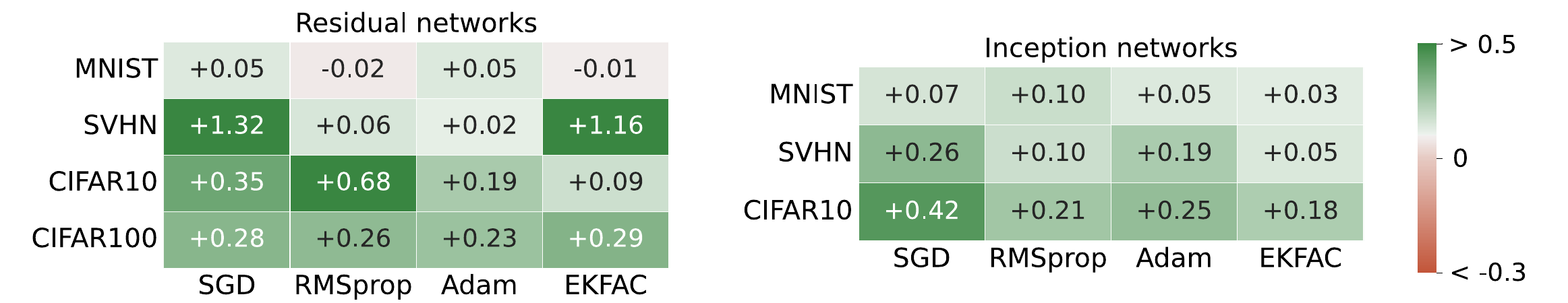}
  \caption{
    Accuracy (\%) improvement ({+}) or degradation ({-}) observed when layer-wise feedback policies are enabled for each best-tuned baseline from Table~\ref{tb:ddp-clf}.
    Color bar is scaled for best view.
  }
  \label{fig:ablation1}
\end{figure*}
\begin{figure}[!t]
  \centering
  \includegraphics[width=.96\linewidth]{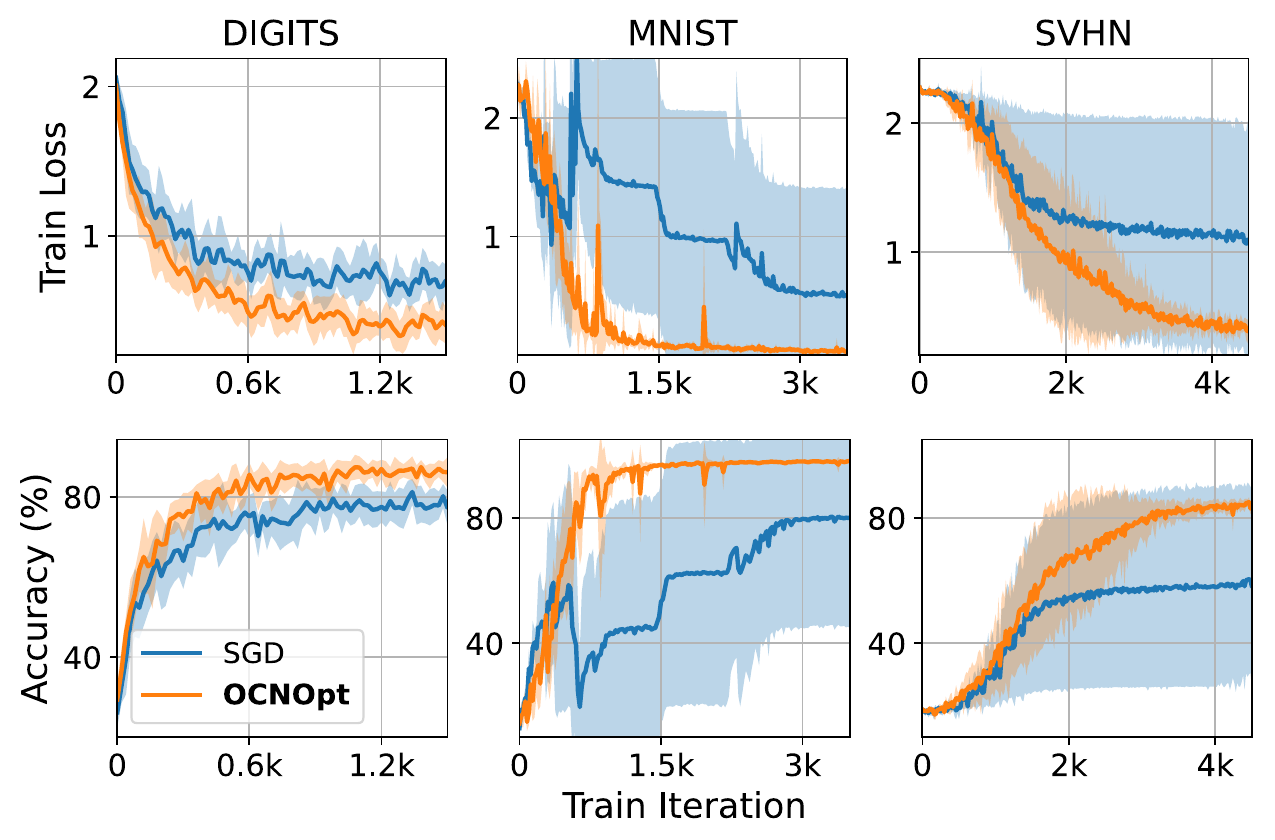}
  \caption{
    Demonstration of the stabilization effect of OCNOpt on training convergence under unstable hyper-parameters.
    The learning rates are set to 0.6, 1.0, and 2.0 for DIGITS, MNIST, and SVHN, respectively.
    We employ FCNs for DIGITS and inception networks for MNIST and SVHN.
  }
  \label{fig:robust}
\end{figure}

OCNOpt distinguishes itself from standard methods on training standard DNNs through
the incorporation of layer-wise feedback policies.
To assess the efficacy of these feedback policies,
we conduct an ablation study using the algorithmic connection established in Corollary~\ref{coro:2}.
Specifically, when $\Qux$ vanishes, OCNOpt reverts to the method associated with the curvature approximation.
For instance, OCNOpt with identity (\emph{resp.} adaptive diagonal \eqref{adap} and Gauss-Newton \eqref{kfac}) curvature approximation produces the same updates as SGD (\emph{resp.} RMSprop and EKFAC) when all $\Qux$ are zeroed out.
In essence, these OCNOpt variants can be regarded as elevating standard methods to accommodate the closed-loop structure,
differing in training procedures only due to the presence of $\Qux$.
This allows them to adjust parameter updates based on changes in the differential state $\dxt$, as depicted in \eqref{opt-u}.

With this insight, we present in Fig.~\ref{fig:ablation1} the performance difference when the baselines in Table~\ref{tb:ddp-clf} are elevated with the closed-loop structure from DDP, focusing specifically on residual and inception networks due to their popularity.
Each grid in the figure corresponds to a distinct training configuration, and the hyper-parameters are kept the same between baselines and their OCNOpt variants.
Therefore, any performance gap solely arises from the feedback policies.
The results indicate that the feedback mechanisms tend to enhance performance or, at the very least, have a neutral impact.
This aligns with prior studies suggesting that feedback improves numerical stability and convergence \cite{liao1992advantages,murray1984differential}, especially when dealing with problems that inherit chained constraints, such as DNNs.

\begin{figure}[!t]
  \centering
  \includegraphics[width=.99\linewidth]{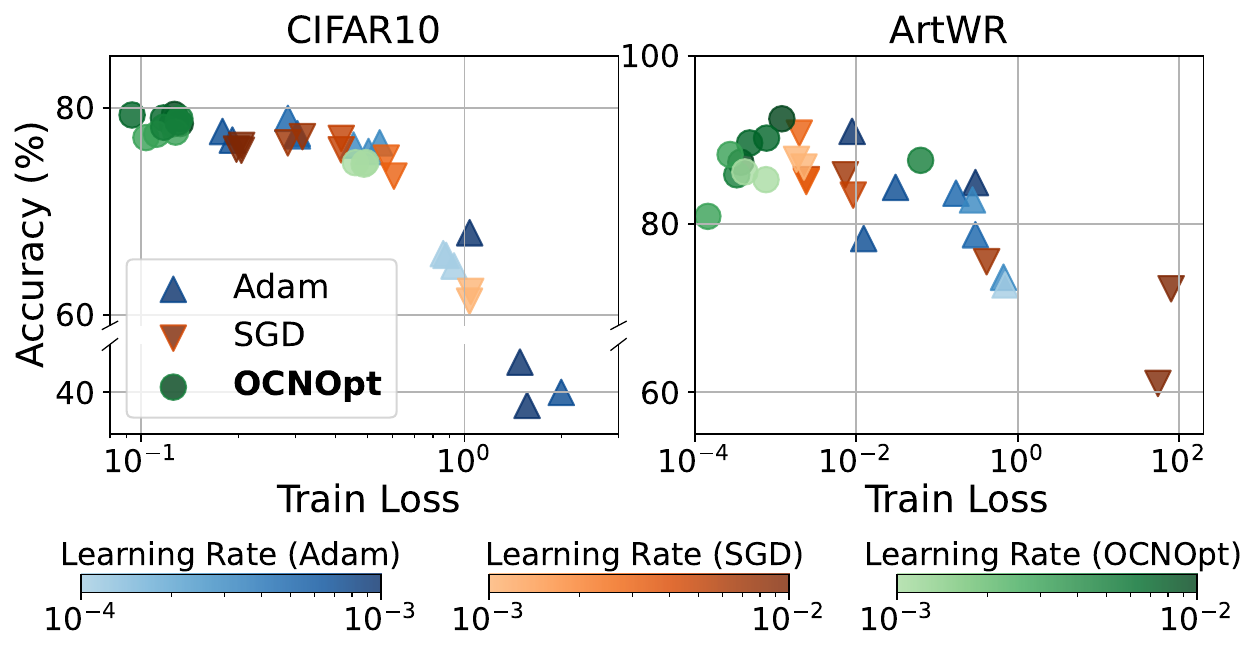}
  \caption{
    Sensitivity analysis where
    each sample represents a training result using different optimizers and learning rates, annotated by distinct symbols and colors.
    Note that $x$-axes are presented in $\log$ scale.
  }
  \label{fig:ablation2}
\end{figure}

From an algorithmic perspective, the impact of feedback is expected to be more significant when a larger step size is employed.
This is because, as shown in \eqref{dxk}, a larger $\delta \theta^*_{k-1}$ increases $\dxt$, subsequently amplifying the feedback $\Qux\dxt$.
This hypothesis is confirmed in Fig.~\ref{fig:robust}, where we consider a relatively large learning rate.
The results clearly illustrate that incorporating feedback updates greatly enhances robustness in training, leading to faster convergence with reduced variance.
While the SGD baseline struggles to make stable progress, OCNOpt converges almost flawlessly.
These improvements shed light on the benefits gained from
bridging optimal control theory and original training methods in DNNs.

As for the training of continuous-time models,
where OCNOpt stands out as an efficient second-order method,
empirical observations indicate that utilizing preconditioned updates significantly mitigates sensitivity to hyper-parameters, such as the learning rate.
This is effectively illustrated in Fig.~\ref{fig:ablation2}, where we sample distinct learning rates from an appropriate interval for each method (represented by different color bars) and record their training results post-convergence.
The results are evident: our method not only converges to higher accuracies with lower training losses, but these values also exhibit a more concentrated distribution on the plots, indicating that OCNOpt achieves superior convergence in a more consistent manner across various hyper-parameter configurations.

\subsection{Game-Theoretic Application} \label{sec:game}

\begin{table}[!t]
  \caption{
     Comparison of EKFAC and OCNOpt with different alignment strategies, reporting accuracy (\%) after 1600 training iterations for SVHN and 6000 for CIFAR10.
  }
  \label{tb:bandit}
  \centering
  \begin{tabular}{l!{\vrule width 0.7pt}cccc}
    \toprule
    \multirow{2}{*}{Dataset} & \multirow{2}{*}{EKFAC} & \multicolumn{3}{c}{OCNOpt + Aligning Strategy} \\ %
     &  & fixed & random & adaptive \\
    \midrule
    SVHN    & 87.49 & 88.20 & 88.12 & \textbf{88.33} \\
    CIFAR10 & 84.67 & 85.20 & 85.27 & \textbf{85.65} \\
    \bottomrule
    \end{tabular}
\end{table}
\begin{figure}[!t]
  \centering
  \subfloat{%
    \includegraphics[width=\linewidth]{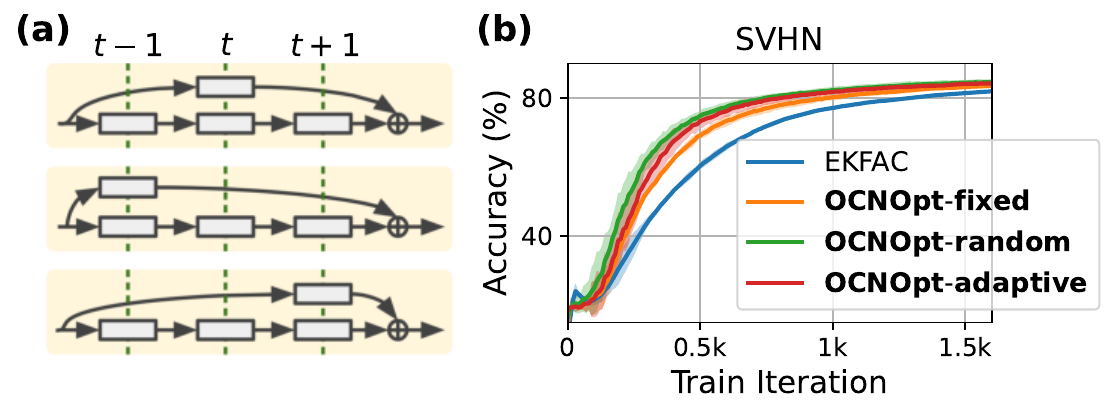}
    \label{fig:bandit-a}
  }%
  \subfloat{%
    \textcolor{white}{\rule{1pt}{1pt}}
    \label{fig:bandit-b}%
  }%
  \caption{
    (a) Various placements of the same residual block result in distinct OCNOpt updates, which are unrecognizable to standard training methods.
    (b) Training convergence w.r.t. different aligning strategies.
  }
  \label{fig:bandit}
\end{figure}

\paragraph*{\textit{(i)} Adaptive alignment using multi-armed bandit}

In Sec.~\ref{sec:4.D}, we briefly mentioned that the construction of the dynamic game \eqref{F} is not unique.
For example, placing the skip connection module in Fig.~\ref{fig:bandit-a} at different locations leads to different definitions of $F_t$, resulting in diverse OCNOpt updates.
It is natural to question the optimal strategy for aligning the layers of the network in our dynamic game and how different alignment strategies impact training.
We address these questions by comparing the performance of three strategies: \emph{(i)} using a fixed alignment throughout training, \emph{(ii)} random alignment at each iteration, and \emph{(iii)} adaptive alignment using a multi-armed bandit.
For the latter case, we interpret pulling an arm as selecting one of the alignments, associating round-wise rewards with the validation accuracy at each iteration.

Figure~\ref{fig:bandit-b} depicts the training curves of OCNOpt using different aligning strategies.
For comparison, we include the EKFAC baseline, which corresponds to zeroing out the feedback gains as discussed in Sec.~\ref{sec:ablation}.
The bandit algorithm is instantiated using EXP3++ \cite{seldin2014one}.
While OCNOpt with a fixed alignment already achieves faster convergence compared to EKFAC, dynamic alignment using either random or adaptive strategy leads to further improvement.
Notably, adaptive alignment consistently achieves higher final accuracy values, as demonstrated in Table~\ref{tb:bandit}.
On CIFAR10, the value is improved by 1\% from the baseline and 0.5\% compared to the other two strategies.
These improvements highlight new algorithmic opportunities inspired by \emph{architecture-aware} optimization.

\begin{figure}[!t]
  \centering
  \subfloat{%
    \includegraphics[width=\linewidth]{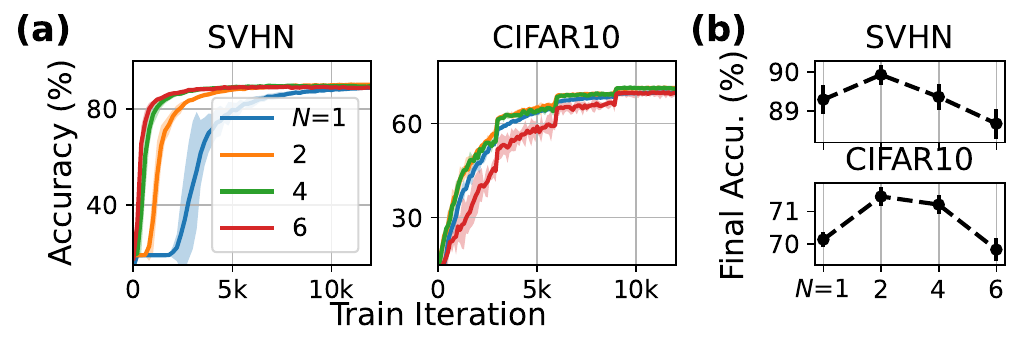}
    \label{fig:multiplayer-a}
  }%
  \subfloat{%
    \textcolor{white}{\rule{1pt}{1pt}}
    \label{fig:multiplayer-b}%
  }%
  \\
  \vskip -0.05in
  \subfloat{%
    \includegraphics[width=\linewidth]{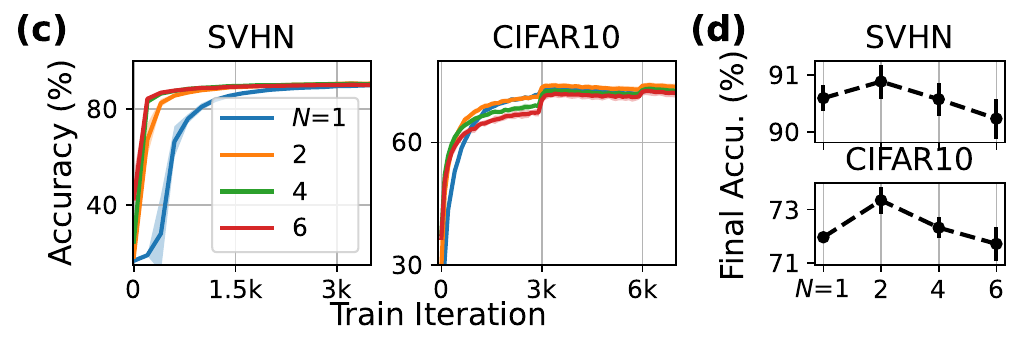}
    \label{fig:multiplayer-c}
  }%
  \subfloat{%
    \textcolor{white}{\rule{1pt}{1pt}}
    \label{fig:multiplayer-d}%
  }%
  \caption{
    Training curves and final accuracies w.r.t. different number of players ($N$) in the fictitious player transformation \eqref{eq:robust-traj} for identity (upper row) and Gauss-Newton (bottom row) curvature approximation, respectively.
  }
  \label{fig:multiplayer}
\end{figure}

\paragraph*{\textit{(ii)} Cooperative training with fictitious players}
The dynamic game interpretation, initially motivated for modern deep architectures, remains valid even when the propagation rule degenerates to feedforward networks \eqref{ff}.
For instance, consider the following transformation:
\begin{align}
    F_k(x_k,\theta_{k,1}, {\cdots},\theta_{k,N}) \coloneqq f_k(x_k,\theta_k), \quad \sum_{n=1}^N \theta_{k,n} = \theta_k.
    \label{eq:robust-traj}
\end{align}
where we intentionally divide the layer's weight into multiple parts.
By treating the propagation rule as multiple players, the game-theoretic framework remains applicable.
Such transformations are prevalent in robust optimal control \cite{pan2015robust,sun2018min},
where the controller (or player in our context)
models external disturbances with \emph{fictitious players},
to enhance robustness or convergence in the optimization process.

Figure.~\ref{fig:multiplayer} presents
the results when OCNOpt assumes different numbers of players $N$ interacting in training a feedforward CNN for image classification.
Here, $N = 1$ corresponds to the original method, and for $N > 1$, we apply the transformation (Eq. 22) and solve for cooperative updates.
It is noteworthy that these fictitious players only appear during the training phase for computing the cooperative updates.
At inference, actions from all players collapse back to $\theta_k$ by summation (\ref{eq:robust-traj}).
Encouraging players to cooperate during training visibly improves both training convergence and test-time accuracy,
as shown in Fig.~\ref{fig:multiplayer-a} and \ref{fig:multiplayer-c}.
However, as indicated in Fig.~\ref{fig:multiplayer-b} and \ref{fig:multiplayer-d}, having more players does not always imply better test-time performance.
In practice, the improvement can slow down or even degrade once $N$ surpasses some critical values.
This suggests that $N$ should be treated as a {hyper-parameter} of {these game-extended transformation}.
Empirically, we find that $N=2$ provides a good trade-off between final performance and convergence speed.

\subsection{Joint optimization of the integration time in Neural ODEs}

Finally, let us discuss an intriguing use case of our OCP framework for optimizing the architecture of Neural ODEs, specifically the integration time $T$.
In practice,
this hyper-parameter can significantly impact on both training time and test-time performance.
Take CIFAR10 for instance (see Fig.~\ref{fig:t1-opt-a}).
The required training time decreases linearly as the integration time decreases from $1$, while the accuracy mostly remains the same unless $T$ becomes too small.
This motivates an interpretation of $T$ as an \emph{architectural parameter} that needs to be jointly optimized during training.
From an OCP standpoint, this can be achieved by including $T$ as an optimizing variable in the Bellman objective $Q(x_t, \theta, T)$ of Neural ODEs in \eqref{Q-cont} and imposing a penalty on having larger $T$.
The derivation follows similarly to that in Sec.~\ref{sec:ddp}; for details, we refer to Appendix~\ref{sec:t1opt} due to space constraint.

\begin{figure}[!t]
  \centering
  \subfloat{%
    \includegraphics[width=\linewidth]{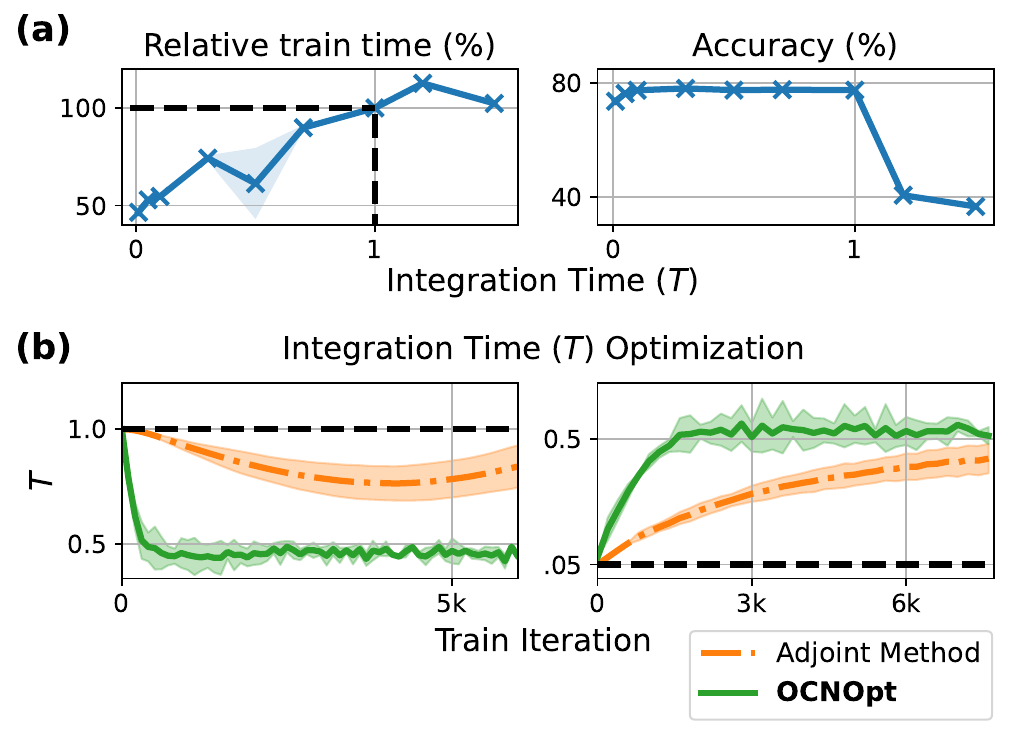}
    \label{fig:t1-opt-a}
  }%
  \subfloat{%
    \textcolor{white}{\rule{1pt}{1pt}}
    \label{fig:t1-opt-b}%
  }%
  \caption{
    (a) Ablation on total training time and test-time accuracies w.r.t. different values of integration time $T$.
    (b) Visualization of how $T$ changes from two initial values in joint optimization. Notice that OCNOpt exhibits stabler convergence compared to baselines such as adjoint method.
  }
  \label{fig:t1-opt}
\end{figure}
\begin{table}[!t]
  \caption{
    Results of integration time joint optimization in Fig.~\ref{fig:t1-opt-b} (left).
  }
  \label{tb:t1-opt}
  \centering
  \begin{tabular}{l!{\vrule width 0.7pt}cccc}
    \toprule
    Method & Relative train time (\%) w.r.t. $T{=}1$ & Accuracy (\%) \\
    \midrule
    Adjoint Method  & 96   & 76.61 \\
    \textbf{OCNOpt} & \textbf{81} & \textbf{77.82} \\
    \bottomrule
    \end{tabular}
\end{table}

Table~\ref{tb:t1-opt} and Fig.~\ref{fig:t1-opt-b}
report the performance of optimizing $T$ and its convergence dynamics, where
we compare OCNOpt to the first-order adjoint baseline proposed in \cite{massaroli2020dissecting}.
It is evident that OCNOpt achieves substantially faster convergence,
and the optimized $T$ hovers stably around $0.5$ without deviation, in contrast to the baseline.
This achieves nearly a 20\% reduction in wall-clock time for a fixed number of training iterations compared to training with fixed $T=1$, without sacrificing test-time accuracy.
We highlight
these improvements as the benefits gained from introducing the OCP principle
to these emerging deep continuous-time models.

\section{Conclusion}

We introduced \textbf{OCNOpt}, a novel class of optimizers
originating from a unique perspective rooted in
dynamical system and optimal control programming (OCP).
By closely examining the algorithmic similarities between existing methods and the DDP algorithm,
we derived algorithmic unification, consolidating OCP as a fundamental discipline for algorithmic design of DNN optimization that combines the strengths of both domains, paving the way for rich algorithmic opportunities and control-theoretic applications.
Through extensive numerical experiments, we demonstrated that
the OCNOpt enhances convergence, robustness, and test-time performance over existing methods.
It is worth noting that OCNOpt, while significantly more efficient than prior OCP-inspired baselines, remains a second-order training method. Bridging its computational gap with first-order methods and extending the OCP principle to other architectures, such as Neural SDEs, PDEs, and Transformers, represent promising directions for future research.

\bibliographystyle{IEEEtran}
\bibliography{main}

\vspace{-40pt}
\begin{IEEEbiography}
[{\includegraphics[width=1in,height=1.25in,clip,keepaspectratio]{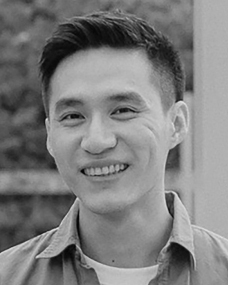}}]{Guan-Horng Liu} received the master degree in Robotics from Carnegie Mellon University, in 2017. He is currently a PhD candidate in Machine Learning at the Georgia Institute of Technology. His research interests lies in scalable computational methods for neural differential equations using stochastic optimal control, higher-order methods, and dynamic optimal transport.\end{IEEEbiography}

\vspace{-50pt}

\begin{IEEEbiography}[{\includegraphics[width=1in,height=1.25in,clip,keepaspectratio]{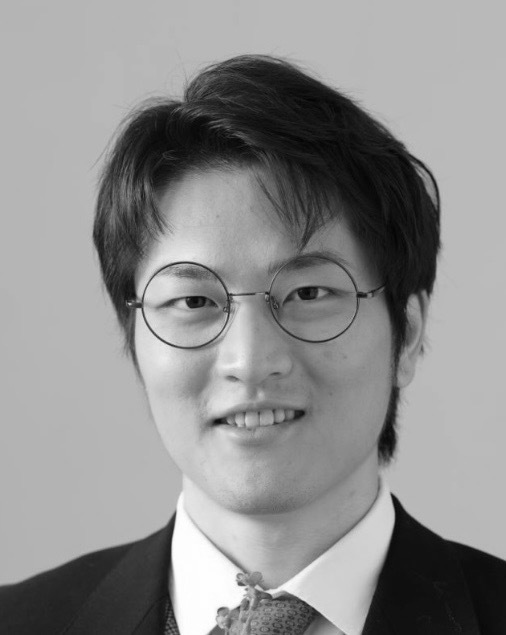}}]{Tianrong Chen} received the master degree in Electrical and Computer Engineering from Georgia Institute of Technology, in 2020. He is currently a PhD candidate in Electrical and Computer Engineering at the Georgia Institute of Technology. His research interests include dynamic optimal transport, scalable computation methods for mean field game, stochastic optimal control and deep learning.
\end{IEEEbiography}

\vspace{-50pt}

\begin{IEEEbiography}[{\includegraphics[width=1in,height=1.25in,clip,keepaspectratio]{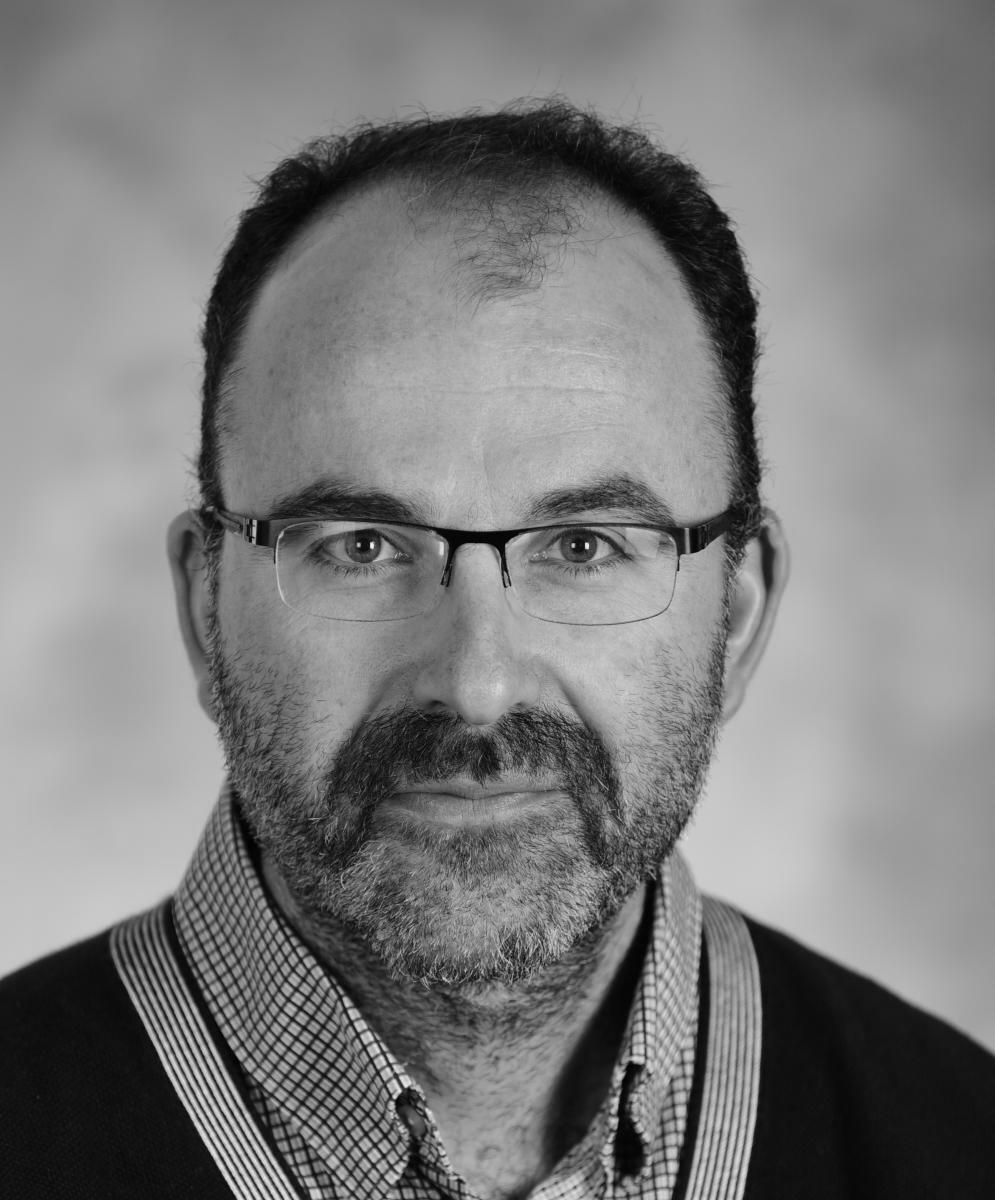}}]{Evangelos A. Theodorou} is an associate professor
in the Daniel Guggenheim School of Aerospace
Engineering at the Georgia Institute of Technology.
Theodorou earned a Bachelors and Masters degree in Electrical Engineering from the Technical University of Crete, Greece, a Masters degree in Computer Science and Engineering from
the University of Minnesota, and a a Ph.D. in
Computer Science from the University of Southern
California. 
His research interests span the areas of stochastic optimal control, robotics, and machine learning.\end{IEEEbiography}

\newpage

\appendices

\section{Proofs} \label{sec:proof}

\subsection{Proof of Theorem~\ref{thm:1}} \label{sec:proof-thm1}

\thmone*

\begin{proof}
The expansion \eqref{bp-Q} can be viewed as the second-order approximation of $Q_k$ from Section~\ref{sec:ddp},
\begin{equation}
  \label{Q-full-expand-app}
  Q_k(\bar x_k, \bar \theta_k) +
  \frac{1}{2}
  \begin{bmatrix}
    1 \\
    \dxt \\
    \dut
  \end{bmatrix}^\T%
  \begin{bmatrix}
    0 & \Qx^\T & \Qu^\T \\
    \Qx & \Qxx & \Qxu \\
    \Qu & \Qux & \Quu
  \end{bmatrix}
  \begin{bmatrix}
    1 \\
    \dxt \\
    \dut
  \end{bmatrix},
\end{equation}
with additional substitution:
\begin{align*}
  \Quu \coloneqq I, \qquad \Qux \coloneqq 0, \qquad \Qxu \coloneqq 0, \qquad \Qxx \coloneqq 0.
\end{align*}
This simplifies the computation of the first-order derivative of the value function $V_k^x$ to
\begin{align*}
  V_k^x 
  &= \Qx - \Qxu \pr{\Quu}^\Inv \Qu \\
  &= \Qx \\
  &= \fx^\T \Vx, 
  \numberthis \label{eq:vl}
\end{align*}
with the terminal condition $V_K^x = \fracpartial{\calL}{x_K}$. That is, the backward dynamics of $V_k^x$ coincide with the Backpropagation,
\begin{equation} \label{eq:bp-app}
  \fracpartial{\calL}{x_k}
  = \fx^\T \fracpartial{\calL}{x_{k+1}},
\end{equation}
and, consequently, the first-order derivative $\Qu$ in \eqref{Qx-Qu} becomes
\begin{align*}
  \Qu 
  &= \lu + \fu^\T \Vx \\
  &= \lu + \fu^\T \fracpartial{\calL}{x_{k+1}} \\
  &= \gamma \bar \theta_k + \fracpartial{\calL}{\theta_k}, \label{eq:ql} \numberthis
\end{align*}
which leads to the desired parameter update:
\begin{align*}
  \delta \theta_k^\star = - \pr{\Quu}^\Inv \Qu \implies
  \theta_k^\star = \bar \theta_k - \pr{\gamma \bar \theta_k + \fracpartial{\calL}{\theta_k}}.
\end{align*}
\end{proof}

\subsection{Proof of Corollary~\ref{coro:2}}

\corotwo*

\begin{proof}

Consider the same second-order expansion in \eqref{Q-full-expand-app} with $\Qux \coloneqq 0$. In this case, the analytic solution simplifies to 
\begin{align*}
    \delta \theta_k^\star(\delta x_k) 
    &= - \pr{\Quu}^\Inv \pr{ \Qu + \Qux \dxt  } \\
    &= - \pr{\Quu}^\Inv \Qu  \label{opt-u-app} \numberthis
\end{align*}
and the derivatives of the value function simplify to
\begin{align*}
  V_k^x 
  &= \Qx - \Qxu \pr{\Quu}^\Inv \Qu \\
  &= \Qx \\
  &= \fx^\T \Vx, 
  \numberthis \label{eq:vl-2}    \\
  V_k^{xx} 
    &= \Qxx - \Qxu \pr{\Quu}^\Inv \Qux \\
    &= \Qxx \\
    &= \fx^\T \Vxx \fx + \fxx \Vx \\
    &= \fracpartial{}{x}\pr{\fx^\T \Vx}
    \numberthis \label{eq:vll-2} 
\end{align*}
with the terminal conditions 
$V_K^{x} = \fracpartial{\calL}{x_K}$
and
$V_K^{xx} = \fracpartial{\calL^2}{x_K \partial x_K}$.
Since \eqref{eq:vl-2} coincides with the Backpropagation dynamics \eqref{eq:bp-app}, we deduct that when $\Qux \coloneqq 0$, it holds that
\begin{equation}
    V_k^x = Q_k^x = \fracpartial{\calL}{x_k},\qquad 
    V_k^{xx} = Q_k^{xx} = \fracpartial{\calL^2}{x_k \partial x_k}.
\end{equation}
This simplifies the first and second-order derivatives, $\Qu$ and $\Quu$, respectively to \eqref{eq:ql} and 
\begin{align*}
  \Quu 
    &= \luu I + \fu^\T \Vxx \fu + \fuu \Vx \\
    &= \luu I + \fu^\T \pr{\fracpartial{\calL^2}{x_{k+1} \partial x_{k+1}}} \fu  + \fuu \fracpartial{\calL}{x_{k+1}} \\
    &= \fracpartial{}{\theta_k} \pr{ \gamma \bar \theta_k + \fu^\T \fracpartial{\calL}{x_{k+1}} } \\
    &= \fracpartial{}{\theta_k} \pr{ \gamma \bar \theta_k + \fracpartial{\calL}{\theta_k} }.
       \numberthis \label{eq:qql}
\end{align*}
Substituting \eqref{eq:ql} and \eqref{eq:qql} back to \eqref{opt-u-app}, we deduct that when $\Qux \coloneqq 0$, the DDP update recovers the Newton's method. Further removing the precondition matrix by $\Quu := I$ reduces the method to gradient descent.

\end{proof}

\noindent
\textbf{Remark.} Comparing the proofs of Theorem~\ref{thm:1} and Corollary~\ref{coro:2}, one may notice that $\Qxx \coloneqq 0$ is an unnecessary condition to reach the same conclusion. This is because, while $\Qxx$ does affect $V^{xx}_k$ and thus the second-order matrices for the preceding layer, the matrices required for computing the update of the preceding layer---namely $Q^{\theta\theta}_{k-1}$ and $Q^{\theta x}_{k-1}$---are replaced respectively by 
\begin{equation*}
    Q^{\theta\theta}_{k-1} \coloneqq I, \qquad Q^{\theta x}_{k-1} \coloneqq 0. 
\end{equation*}
Consequently, the DDP update remains unaffected by the choice of $\Qxx$ and depends solely on $\Qu$, which coincides with the gradient descent, as shown in \eqref{eq:ql}. In Theorem~\ref{thm:1}, we deliberately set $\Qxx \coloneqq 0$ for the compactness.

\newpage 

\subsection{Proof of Proposition~\ref{prop:3}}

\noindent
The following lemma will be useful in proving Proposition~\ref{prop:3}.
\begin{lemma}\label{lm:6}
  Let $V_K^{xx} = y y^\T$, then $V_k^{xx} = r_k r_k^\T$ where $r_k$ follows 
  \begin{align*}
    r_k = \sqrt{ 1 - p_k^\T \pr{\Quu}^{\dagger} p_k  } q_k,~
    q_k = \fx^\T r_{k+1}, ~ p_k = \fu^\T r_{k+1}.
  \end{align*}
\end{lemma}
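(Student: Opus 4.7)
The plan is to prove the lemma by backward induction on $k$, starting from the terminal boundary $V_K^{xx} = y y^\T = r_K r_K^\T$ with $r_K := y$. For the inductive step, I would assume that $V_{k+1}^{xx} = r_{k+1} r_{k+1}^\T$ holds and show the factorization propagates to $V_k^{xx}$. To keep the derivation tractable and consistent with the paper, I would invoke the standard omission of the second-order derivatives of $f_k$ mentioned immediately after \eqref{Q2nd}, so that
\begin{equation*}
    \Qxx \approx \fx^\T \Vxx \fx, \quad \Qxu \approx \fx^\T \Vxx \fu, \quad \Qux = \Qxu^\T.
\end{equation*}

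With the inductive hypothesis plugged in, the rank-one structure of $V_{k+1}^{xx}$ immediately yields the outer-product form of $\Qxx$ and $\Qxu$ from Proposition~\ref{prop:3}, namely
\begin{equation*}
    \Qxx = q_k q_k^\T, \qquad \Qxu = q_k p_k^\T, \qquad \Qux = p_k q_k^\T,
\end{equation*}
with $q_k = \fx^\T r_{k+1}$ and $p_k = \fu^\T r_{k+1}$. Substituting these into the Bellman value update \eqref{Vxx} produces
\begin{equation*}
    V_k^{xx} = q_k q_k^\T - q_k p_k^\T \pr{\Quu}^{\dagger} p_k q_k^\T = \pr{1 - p_k^\T \pr{\Quu}^{\dagger} p_k} q_k q_k^\T,
\end{equation*}
since the middle factor $p_k^\T \pr{\Quu}^\dagger p_k$ is a scalar. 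Pulling that scalar into a symmetric square root gives $V_k^{xx} = r_k r_k^\T$ with the stated definition of $r_k$, completing the induction.

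The main obstacle I expect is justifying that the scalar $1 - p_k^\T \pr{\Quu}^{\dagger} p_k$ is nonnegative, so that the square root is real and $r_k$ is well-defined. I would argue this from the positive semi-definiteness of $V_k^{xx}$ inherited from the Gauss-Newton style terminal factorization $V_K^{xx} = y y^\T \succeq 0$ together with the weight decay $\gamma \ge 0$ ensuring that $\Quu$ (and hence $\pr{\Quu}^\dagger$) is PSD on the relevant subspace. Equivalently, one can write the Schur complement relation $\big(\begin{smallmatrix} \Qxx & \Qxu \\ \Qux & \Quu \end{smallmatrix}\big) \succeq 0$ and extract nonnegativity of $1 - p_k^\T \pr{\Quu}^\dagger p_k$ from it, since the rank-one block structure collapses the Schur complement to this scalar times $q_k q_k^\T$. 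A brief remark about the sign convention and the pseudoinverse acting on the range of $p_k$ would make the argument rigorous, after which the inductive step is complete and the lemma follows.
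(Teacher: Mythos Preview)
Your proposal is correct and follows essentially the same backward-induction argument as the paper: assume $V_{k+1}^{xx}=r_{k+1}r_{k+1}^\T$, use the (second-order-$f_k$-dropped) expressions for $\Qxx$ and $\Qxu$ to get $q_kq_k^\T$ and $q_kp_k^\T$, substitute into \eqref{Vxx}, and factor out the scalar. Your discussion of the nonnegativity of $1 - p_k^\T(\Quu)^\dagger p_k$ is in fact more careful than the paper, which simply writes the square root without comment.
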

\begin{proof}
  The condition $V_k^{xx} = r_k r_k^\T$ holds at $k{=}K$ by construction.
  Suppose it also holds at $k+1$, then
  \begin{align*}
    V_k^{xx}
    &= \Qxx - \Qxu \pr{\Quu}^\Inv \Qux \\
    &= \fx^\T \Vxx \fx - \fx^\T \Vxx \fu \pr{\Quu}^\Inv \fu^\T \Vxx \fx \\
    &= \fx^\T r_{k+1} r_{k+1}^\T \fx \\
    &\qquad - \fx^\T r_{k+1} r_{k+1}^\T \fu \pr{\Quu}^\Inv \fu^\T r_{k+1} r_{k+1}^\T \fx \\
    &= q_k q_k^\T - q_k p_k^\T \pr{\Quu}^\Inv p_k q_k^\T \\
    &= q_k \pr{1 - p_k^\T \pr{\Quu}^\Inv p_k} q_k^\T \\
    &= r_k r_k^\T
  \end{align*}
  We conclude the proof by backward induction.
\end{proof}

\propthree*
\begin{proof}
Given Lemma~\ref{lm:6}, it suffices to prove the factorization of $\Qxx$ and $\Qxu$ in \eqref{Qxu-fac}, which can be shown via
\begin{align*}
   \Qxx &= \fx^\T \Vxx \fx = \fx^\T r_{k+1} r_{k+1}^\T \fx = q_k q_k^\T, \\
   \Qxu &= \fx^\T \Vxx \fu = \fx^\T r_{k+1} r_{k+1}^\T \fu = q_k p_k^\T.
\end{align*}
\end{proof}

\subsection{Proof of Corollary~\ref{coro:4}}
\corofour*
\begin{proof}
From Lemma~\ref{lm:6}, $V_k^{xx} = r_k r_k^\T$ where $r_k$ can be expanded as
\begin{align}
  \label{eq:r}
  r_k = \sqrt{ 1 - r_{k+1}^\T \fu \pr{\Quu}^{\dagger} \fu^\T r_{k+1}  } \fx^\T r_{k+1}
\end{align}
with the terminal condition $r_K = \beta \fracpartial{}{x_K}\calL$.
Consider the reparametrization $r_k \coloneqq \alpha_k \fracpartial{\calL}{x_k}$, then \eqref{eq:r} becomes:
\begin{align*}
    &\alpha_k \fracpartial{\calL}{x_k} \\
  =&\sqrt{ 1 {-} \alpha_{k+1}^2 \fracpartial{\calL}{x_{k+1}}^\T \fu \pr{\Quu}^{\dagger} \fu^\T \fracpartial{\calL}{x_{k+1}}} \fx^\T \fracpartial{\calL}{x_{k+1}} \alpha_{k+1}.
\end{align*}
Since $\fracpartial{\calL}{x_k} = \fx^\T \fracpartial{\calL}{x_{k+1}}$ by construction,
it must hold that
\begin{align}
  \label{eq:alpha}
  \alpha_k
= \sqrt{ 1 - \alpha_{k+1}^2 \fracpartial{\calL}{x_{k+1}}^\T \fu \pr{\Quu}^{\dagger} \fu^\T \fracpartial{\calL}{x_{k+1}}} \alpha_{k+1},
\end{align}
with the terminal condition $\alpha_K = \beta$.
Next, recall that
\begin{align*}
  V_k^x
  &= \Qx - \Qxu \pr{\Quu}^\Inv \Qu \\
  &= \fx^\T \Vx - q_k p_k^\T \pr{\Quu}^\Inv \fu^\T \Vx \\
  &= \fx^\T \Vx - \fx^\T r_{k+1} r_{k+1}^\T \fu \pr{\Quu}^\Inv \fu^\T \Vx.
\end{align*}
Substitute $r_k = \alpha_k \fracpartial{\calL}{x_k}$ and introduce $V_k^x \coloneqq \bar\alpha_k \fracpartial{\calL}{x_k}$,
\begin{align*}
    \bar\alpha_k \fracpartial{\calL}{x_k}
  =~&\bar\alpha_{k+1} \fx^\T \fracpartial{\calL}{x_{k+1}} - \alpha_{k+1}^2 \bar\alpha_{k+1} \fx^\T \fracpartial{\calL}{x_{k+1}} \\
  & \cdot  \pr{\fracpartial{\calL}{x_{k+1}}^\T \fu \pr{\Quu}^\Inv \fu^\T \fracpartial{\calL}{x_{k+1}}}.
\end{align*}
Notice that the term inside the bracket is a scalar.
Since $\fracpartial{\calL}{x_k} = \fx^\T \fracpartial{\calL}{x_{k+1}}$,
it must hold that
\begin{align*}
  \bar\alpha_k
&= \bar\alpha_{k+1} - \alpha_{k+1}^2 \bar\alpha_{k+1} \fracpartial{\calL}{x_{k+1}}^\T \fu \pr{\Quu}^\Inv \fu^\T \fracpartial{\calL}{x_{k+1}} \\
&= \bar\alpha_{k+1} \pr{ 1 - \alpha_{k+1}^2 \fracpartial{\calL}{x_{k+1}}^\T \fu \pr{\Quu}^\Inv \fu^\T \fracpartial{\calL}{x_{k+1}} }, \numberthis \label{eq:beta}
\end{align*}
with the terminal condition $\bar\alpha_K = 1$.
Comparing \eqref{eq:beta} to \eqref{eq:alpha}, we can rewrite $\bar\alpha_k$ compactly as
\begin{align*}
  \bar\alpha_k = \prod_{n = k}^{K-1} \pr{\frac{\alpha_{n}}{\alpha_{n+1}}}^2.
\end{align*}
\end{proof}

\subsection{Proof of Proposition~\ref{prop:5}}
\propfive*
\begin{proof}
Recall the continuous-time Bellman objective
\begin{equation*}
 Q_t(x_t, \theta) {\coloneqq} {\int_t^T} \ell(\theta) \dt + \calL\pr{x_t + {\int_t^T} F(\tau, x_\tau,\theta) \rd \tau}.
\end{equation*}
which can be rewritten as
\begin{equation}
  \label{eq:dQt}
  - \fracdiff{}{t} Q_t(x_t, \theta) = \ell(\theta), \quad Q_T(x_T, \theta) = \calL(x_T).
\end{equation}

Given a solution path $(\bar x_t, \bar \theta)$ that solves the ODE, $\fracdiff{}{t} x_t = F(t, x_t, \theta)$, we define the differential variables, $\delta x_t \coloneqq x_t - \bar x_t$ and
$\delta \theta \coloneqq \theta - \bar \theta$, and apply quadratic expansion,
\begin{align*}
  &-\fracdiff{}{t} \left(
Q_t(\bar x_t, \bar \theta) + 
  \frac{1}{2}
  \begin{bmatrix}
    1 \\
    \delta x_t \\
    \delta \theta
  \end{bmatrix}^\T%
  \begin{bmatrix}
    0 & {Q_t^x}^\T & {Q_t^\theta}^\T \\
    Q_t^x & {Q_t^{xx}} & {Q_t^{x\theta}} \\
    {Q_t^\theta} & {Q_t^{\theta x}} & {Q_t^{\theta\theta}}
  \end{bmatrix}
  \begin{bmatrix}
    1 \\
    \delta x_t \\
    \delta \theta
  \end{bmatrix}\right) \\
  &\qquad= \ell(\bar \theta) + {\ell^\theta}^\T \delta \theta + \frac{1}{2}\delta \theta^\T \ell^{\theta\theta} \delta \theta. \numberthis \label{eq:dQ2}
\end{align*}
The LHS of \eqref{eq:dQ2} involves the time derivative of the expansion.
These computations follow standard ordinary calculus, \eg
\begin{align*}
  &\fracdiff{}{t} \pr{ \delta x_t^\T Q_t^{x\theta} \delta \theta } \\
    =&
    {\fracdiff{\delta x_t}{t}}^\T Q_t^{x\theta}  \delta \theta
    + \delta x_t^\T \pr{\fracdiff{}{t} Q_t^{x\theta}} \delta \theta  + \delta x_t^\T Q_t^{x\theta} \fracdiff{\delta \theta}{t} \\
    =& \pr{F_t^{x\T}\delta x_t + F_t^{\theta\T}\delta \theta} Q_t^{x\theta}  \delta \theta + \delta x_t^\T \pr{\fracdiff{}{t} Q_t^{x\theta}} \delta \theta, 
\end{align*}
where in the second equality we expand 
the dynamics of the differential variables $\delta x_t$ and $\delta \theta$ up to the first order,
\begin{align}
  \label{eq:ddx}
  \fracdiff{\delta x_t}{t} = F_t^{x\T}\delta x_t + F_t^{\theta\T}\delta \theta, \quad
  \fracdiff{\delta \theta}{t} = 0.
\end{align}

Repeating similar computations for the remaining terms on the LHS of \eqref{eq:dQ2} and substituting $\ell \coloneqq \frac{\gamma}{2}\norm{\theta}^2$ into the RHS of \eqref{eq:dQ2}, we obtain the following expression for the time evolution of the second-order matrices:
\begin{subequations}
  \label{eq:dQ}
  \begin{align}
  - \fracdiff{Q_t^{xx}}{t}           &= F_t^{x\T}Q_t^{xx} + Q_t^{xx}F_t^{x}, \quad Q_T^{xx} = \fracpartial{\calL^2}{x_T \partial x_T} \label{eq:dQxx} \\
  - \fracdiff{Q_t^{x\theta}}{t}      &= F_t^{x\T}Q_t^{x\theta} + Q_t^{xx}F_t^{\theta}, \quad Q_T^{x\theta} = 0 \label{eq:dQxu} \\
  - \fracdiff{Q_t^{\theta\theta}}{t} &= F_t^{\theta\T}Q_t^{x\theta} + Q_t^{\theta x}F_t^{\theta} + \gamma I, \quad Q_T^{\theta\theta} = 0 \label{eq:dQuu}
  \end{align}
\end{subequations}

Finally, one can verify via backward induction that
when $Q_T^{xx} = \fracpartial{\calL}{x_T}\fracpartial{\calL}{x_T}^\T$,
the solutions to these ODEs \eqref{eq:dQ} can be factorized into
\begin{align*}
  Q_t^{xx} = q_t q_t^\T, \quad
  Q_t^{x\theta} = q_t p_t^\T, \quad
  Q_t^{\theta\theta} = p_t p_t^\T + \gamma\cdot(T-t)I,
\end{align*}
where $q_t$ and $p_t$ follow \eqref{qt-cont}.
This is due to the fact that
\begingroup
\allowdisplaybreaks
\begin{align*}
  - \fracdiff{Q_t^{xx}}{t}
  &= F_t^{x\T}Q_t^{xx} + Q_t^{xx}F_t^{x}
  = F_t^{x\T}q_t q_t^\T + q_t q_t^\T F_t^{x} \\
  &= - \fracdiff{q_t}{t} q_t^\T - q_t \fracdiff{q_t}{t}^\T
  = - \fracdiff{}{t} \pr{q_t q_t^\T} \\
  - \fracdiff{Q_t^{x\theta}}{t}
  &= F_t^{x\T}Q_t^{x\theta} + Q_t^{xx}F_t^{\theta}
  = F_t^{x\T}q_t p_t^\T + q_t q_t^\T F_t^{\theta} \\
  &= - \fracdiff{q_t}{t} p_t^\T - q_t \fracdiff{p_t}{t}^\T
  = - \fracdiff{}{t} \pr{q_t p_t^\T} \\
  - \fracdiff{Q_t^{\theta\theta}}{t}
  &= F_t^{\theta\T}Q_t^{x\theta} + Q_t^{\theta x}F_t^{\theta} + \gamma I \\
  &= F_t^{\theta\T}q_t p_t^\T + p_t q_t^\T F_t^{\theta}  + \gamma I \\
  &= - \fracdiff{p_t}{t} p_t^\T - p_t \fracdiff{p_t}{t}^\T  + \gamma I
  = - \fracdiff{}{t} \pr{p_t p_t^\T}  + \gamma I.
\end{align*}%
\endgroup

\end{proof}

\section{Additional Discussions}

\subsection{Extension of Proposition~\ref{prop:3} to Higher Rank} \label{sec:prop3-ext}

It is possible to extend Proposition~\ref{prop:3} to higher-rank factorization with rank $N$, where
\begin{align*}
  V_K^{xx} = \sum_i^N y^i y^{i\T}.
\end{align*}
Adopting similar derivation, suppose $V_{k+1}^{xx} = \sum_i^N r_{k+1}^i r_{k+1}^{i\T}$ holds at $k+1$, then
\begin{align*}
  \Qxx &= \fx^\T \Vxx \fx = \fx^\T \pr{\sum_i^N r_{k+1}^i r_{k+1}^{i\T}} \fx = \sum_i^N q_k^i q_k^{i\T}, \\
  \Qxu &= \fx^\T \Vxx \fu = \fx^\T \pr{\sum_i^N r_{k+1}^i r_{k+1}^{i\T}} \fu = \sum_i^N q_k^i p_k^{i\T},
\end{align*}
where
\begin{align*}
  q_k^i \coloneqq \fx^\T r_{k+1}^i, \quad p_k^i \coloneqq \fu^\T r_{k+1}^i
\end{align*}
are defined similarly to \eqref{qpr}. Therefore, we obtain
\begin{align*}
  V_k^{xx}
  &= \Qxx - \Qxu \pr{\Quu}^\Inv \Qux \\
  &= \sum_i^N q_k^i q_k^{i\T} - \pr{\sum_i^N q_k^i p_k^{i\T}} \pr{\Quu}^\Inv \pr{\sum_i^N p_k^i q_k^{i\T}} \\
  &= \sum_i^N q_k^i q_k^{i\T} - \sum_i^N \sum_j^N q_k^i p_k^{i\T} \pr{\Quu}^\Inv p_k^j q_k^{j\T}.
  \numberthis \label{eq:V2}
\end{align*}

\Eqref{eq:V2} can be presented in a compact matrix form
\begin{align*}
  V_k^{xx} = \mQ_k \mS_k \mQ_k^\T
\end{align*}
where $\mQ_k = [q_k^1, \cdots, q_k^N] \in \R^{d \times N}$ and the element of $\mS_k$ given by
$[\mS_k]_{ij} = p_k^{i\T} \pr{\Quu}^\Inv p_k^j + [I]_{ij}$, where $[I]_{ij}$ is the $(i,j)$ element of the identity matrix.

Since $\mS_k$ is a real symmetric matrix, we know $\mS_k = \mU_k \Sigma_k \mU_k^\T$ for some orthogonal matrix $\mU_k$ and diagonal matrix $\Sigma_k$.
\begin{align*}
  V_k^{xx} = \mQ_k \mU_k \Sigma_k \pr{ \mQ_k \mU_k}^\T = \sum_i^N \sigma_{ii} v_{k}^i v_{k}^{i\T} = \sum_i^N r_{k}^i r_{k}^{i\T},
\end{align*}
where $v_{k}^i$ is the $i$-th column of $\mQ_k \mU_k$, $\sigma_{ii}$ is the $i$-th diagonal entry of $\Sigma_k$, and finally we define $r_{k}^i \coloneqq \sqrt{\sigma_{ii}} v_{k}^i$.

\subsection{Kronecker-based Approximation of Preconditioned Matrix} \label{sec:approx-explain}

Here, we detail the computation in Section~\ref{sec:4.C}. We first restate \eqref{kfac} and \eqref{kfac-u} below:
\begin{align}
  \E\br{ \Qu \Qu^\T}
  &= \E\br{ \pr{x_k \otimes V^h_k} \pr{x_k \otimes V^h_k}^\T } \nonumber \\
  &= \E\br{ \pr{x_k x_k^\T} \otimes \pr{V^h_k {V^h_k}^\T} } \nonumber \\
  &\approx \E\br{x_k x_k^\T} \otimes \E\br{V^h_k {V^h_k}^\T}, \tag{\ref{kfac}} \\
    \pr{\Quu}^\dagger
    &\approx \textstyle \frac{1}{\gamma} I + \pr{\E\br{x_k x_k^\T} \otimes \E\br{V^h_k {V^h_k}^\T}}^\dagger \nonumber \\
     &= (U_1 \otimes U_2) (\textstyle \frac{1}{\gamma} I + \Sigma_1^\dagger \otimes \Sigma_2^\dagger) (U_1^\T \otimes U_2^\T).  \tag{\ref{kfac-u}}
\end{align}

The approximation in \eqref{kfac} replaces the averaging of the Kronecker product of two matrices, $x_kx_k^\T$ and $V_k^xV_k^{x\T}$, with the Kronecker product of their averaged matrices, $\E[x_kx_k^\T]$ and $\E[V_k^xV_k^{x\T}]$. Such an approximation has been extensively explored in second-order DNN training \cite{martens2014new,grosse2016kronecker,martens2015optimizing}, and has been empirically shown to yield reasonable approximation. 
Fundamentally, it assumes statistical independence between the two matrices.

To compute the pseudo inversion of \eqref{kfac} and to account for possible regularization $\gamma \ge 0$, we first apply eigen-decomposition to the final two matrices in \eqref{kfac}:
\begin{align*}
    \E\br{x_k   x_k^\T} = U_1 \Sigma_1 U_1^\top, \quad 
    \E\br{V_k^x V_k^{x\T}} = U_2 \Sigma_2 U_2^\top.
\end{align*}
It then follows from standard Kronecker-based computation that
\begin{align*}
    &~~~\pr{\E\br{x_k x_k^\T} \otimes \E\br{V^h_k {V^h_k}^\T}}^\dagger \\
    &= \pr{ \pr{U_1 \Sigma_1 U_1^\top} \otimes \pr{U_2 \Sigma_2 U_2^\top} }^\dagger \\
    &= \pr{ \pr{U_1 \Sigma_1 U_1^\top}^\dagger \otimes \pr{U_2 \Sigma_2 U_2^\top}^\dagger } \\
    &= \pr{U_1 \Sigma_1^\dagger U_1^\top} \otimes \pr{U_2 \Sigma_2^\dagger U_2^\top} \\
    &= \pr{U_1 \otimes U_2} \pr{\Sigma_1^\dagger \otimes \Sigma_2^\dagger} \pr{U_1^\top \otimes U_2^\top}
\end{align*}
Adding back the regularization $\frac{1}{\gamma}I$ by noticing the fact that 
\begin{align*}
    \pr{U_1 \otimes U_2} \pr{U_1^\top \otimes U_2^\top}
    = \pr{U_1 U_1^\top} \otimes \pr{U_2 U_2^\top} = I
\end{align*}
leads to the result in \eqref{kfac-u}.

\subsection{Markovian Construction for Residual Layer} \label{sec:res-app}

For completeness, the propagation rule for the $k$-th and $(k+2)$-th layers in Fig.~\ref{fig:resnet} can be made Markovian via
\begin{align*}
    x_k &= F_{k-1}(x_{k-1}, \theta_{k-1}) = 
    \begin{bmatrix}
        f_{k-1}(z_{k-1}, \theta_{k-1}) \\ 
        z_{k-1}
    \end{bmatrix}, \\
    x_{k+2} &= F_{k+1}(x_{k+1}, \theta_{k+1}) = 
    \mathrm{ReLU}(f_{k+1}(z_{k+1}, \theta_{k+1}) + z_{k,\text{res}}).
\end{align*}

\subsection{Integration Time Joint Optimization} \label{sec:t1opt}
\noindent
Recall the continuous-time Bellman objective in \eqref{Q-cont},
\begin{align}
  \tag{\ref{Q-cont}}
 Q_t(x_t, \theta) {\coloneqq} {\int_t^T} \ell(\theta) \dt + \calL\pr{x_t + {\int_t^T} F(\tau, x_\tau,\theta) \rd \tau}.
\end{align}
Consider an extension \eqref{Q-cont} in which the integration time $T$ is introduced as a variable:
\begin{align}
  \label{eq:Qt1}
  Q_t(x_t, \theta, T) = \int_t^T \ell(\theta) \dt + \widetilde\calL(x_T, T),
\end{align}
where we set $\widetilde\calL(x_T, T) \coloneqq \calL(x_T) + \frac{c}{2}T^2$ for some hyper-parameter $c$. Similar to \eqref{eq:dQt}, we can rewrite \eqref{eq:Qt1} as
\begin{align}
  \label{eq:dQt1}
  - \fracdiff{}{t} Q_t(x_t, \theta, T) = \ell(\theta), \quad Q_T(x_T, \theta, T) = \widetilde\calL(x_T, T)
\end{align}
and follow a second-order expansion similar to \eqref{eq:dQ2}, except w.r.t. additionally to the differential integration time $\delta T$.
After substituting \eqref{eq:ddx} and additionally $\fracdiff{\delta T}{t} = 0$, we obtain the following ODEs:
\begin{equation}
  \label{eq:dQt11}
  \begin{split}
  - \fracdiff{Q_t^T}{t} &= 0, \quad
  - \fracdiff{Q_t^{TT}}{t} = 0, \\
  - \fracdiff{Q_t^{Tx}}{t} &= Q_t^{Tx} \Fx, \quad
  - \fracdiff{Q_t^{T\theta}}{t} = Q_t^{Tx} \Fu.
\end{split}
\end{equation}

The derivation of the terminal conditions of \eqref{eq:dQt11} is more involved.
First, perturbing the integration time $T$ by an infinitesimal amount $\delta T$ yields
\begin{align*}
  Q_T(x_{T + \delta T}, \theta, T + \delta T)
    &= \ell(\theta) \delta T + \widetilde\calL(x_{T + \delta T}, T + \delta T).
\end{align*}
The derivatives of $\widetilde\calL$ w.r.t. $T$ take the forms:
\begin{align*}
  \fracdiff{}{T}\widetilde\calL &= \fracpartial{}{T}\widetilde\calL + F^\T \fracpartial{}{x_T}\widetilde\calL \equiv \widetilde\calL^T + F^\T \widetilde\calL^x, \\
  \fracdiff{}{T} \pr{\fracdiff{}{T}\widetilde\calL} &= \widetilde\calL^{TT} + F^\T\widetilde\calL^{xT} + \widetilde\calL^{xT\T} F + F^\T  \widetilde\calL^{xx} F \\
  \fracdiff{}{x_T} \pr{\fracdiff{}{T}\widetilde\calL} &= \widetilde\calL^{xT} + \widetilde\calL^{xx} F, \quad \fracdiff{}{\theta} \pr{\fracdiff{}{T}\widetilde\calL} = 0.
\end{align*}
Consequently, the terminal conditions of \eqref{eq:dQt11} are given by
\begin{equation}\label{eq:dQt111}\begin{split}
  Q_T^T    &= \ell + \widetilde\calL^T + F^\T \widetilde\calL^x, \\
  Q_T^{TT} &= \widetilde\calL^{TT} + F^\T\widetilde\calL^{xT} + \widetilde\calL^{xT\T} F + F^\T  \widetilde\calL^{xx} F, \\
  Q_T^{Tx} &= \widetilde\calL^{Tx} + F^\T \widetilde\calL^{xx}, \\
  Q_T^{T\theta} &= 0
\end{split}\end{equation}
With \eqref{eq:dQt111}, we can solve the ODEs \eqref{eq:dQt11} then construct the update rule for the integration time as
\begin{align}
  \delta T^\star(\delta x, \delta \theta) =
  - \pr{Q_0^{TT}}^{\Inv} \pr{ Q_0^{T} + Q_0^{Tx}\delta x + Q_0^{T\theta}\delta \theta }.
\end{align}
In practice, we found that dropping $\delta x$ tends to stabilize training. We set $\delta \theta$ as the parameter update.

\begin{figure}[!t]
  \centering
    \includegraphics[width=.65\linewidth]{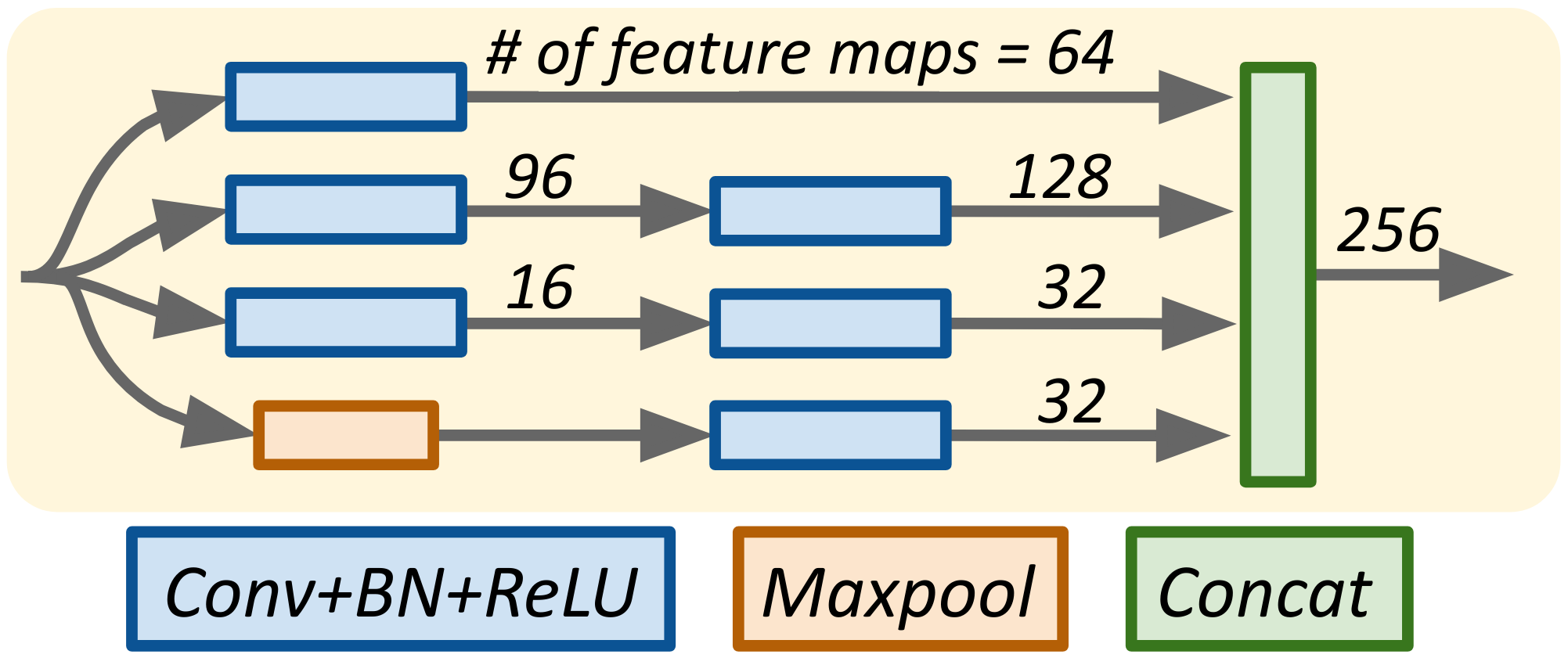}
  \caption{
    Illustration of an inception block.
  }
  \label{fig:inception}
\end{figure}
\begin{table}[!t]
  \caption{
    Hyper-parameter grid search for each baseline.
  }
  \label{tb:hyper}
  \centering
  \begin{tabular}{l!{\vrule width 0.7pt}cccc}
    \toprule
     & Learning rate $\eta$ & Weight regularization $\gamma$ \\
    \midrule
    SGD   & \specialcell{\{ 3e-3, 5e-3, 7e-3, 1e-2, 3e-2, \\ 5e-2, 7e-2, 1e-1, 3e-1, 5e-1 \}}  & \{0.0, 1e-4, 1e-3 \} \\[10pt]
    \specialcell{Adam \& \\ RMSProp}  & \specialcell{\{ 5e-4, 7e-4, 1e-3, 3e-3, 5e-3, \\ 7e-3, 1e-2, 3e-2, 5e-2 \}} & \{0.0, 1e-4, 1e-3 \} \\[10pt]
    EKFAC & \specialcell{\{ 3e-3, 5e-3, 7e-3, 1e-2, 3e-2, \\ 5e-2, 7e-2, 1e-1, 3e-1 \}} & \{0.0, 1e-4, 1e-3 \} \\
    \bottomrule
    \end{tabular}
\end{table}
\begin{table}[!t]
  \caption{
    Curvature approximation adopted for OCNOpt in Table~\ref{tb:ddp-clf}. We consider identity, adaptive \eqref{adap}, or Gauss-Newton \eqref{kfac} approximations; for further details, refer to Table~\ref{table:update-rule}.
  }
  \label{tb:curvature}
  \centering
  \begin{tabular}{r!{\vrule width 0.5pt}r!{\vrule width 1pt}c}
    \toprule
    Network Type & Dataset & \specialcell{Curvature approximation \\ used in OCNOpt} \\
    \midrule
    \multirow{4}{*}{Fully-Connected}
    & WINE & adaptive \\[1pt]
    & DIGITS & adaptive \\[1pt]
    & MNIST & adaptive \\[1pt]
    & FMNIST & adaptive \\[1pt]
    \midrule
    \multirow{3}{*}{Convolutional}
    & MNIST & adaptive \\[1pt]
    & SVHN & Gauss-Newton \\[1pt]
    & CIFAR10 & Gauss-Newton \\[1pt]
    \midrule
    \multirow{4}{*}{Residual}
    & MNIST & Gauss-Newton \\[1pt]
    & SVHN & Gauss-Newton \\[1pt]
    & CIFAR10 & adaptive \\[1pt]
    & CIFAR100 & Gauss-Newton \\[1pt]
    \midrule
    \multirow{3}{*}{Inception}
    & MNIST & identity \\[1pt]
    & SVHN & Gauss-Newton \\[1pt]
    & CIFAR10 & Gauss-Newton \\[1pt]
    \bottomrule
    \end{tabular}
\end{table}

\section{Experiment Detail} \label{app:d}

\subsection{Network Architectures}

\textbf{Table~\ref{tb:ddp-clf}.}
We employ ReLU activation for all datasets, except Tanh for WINE and DIGITS to better distinguish between optimizers.
The implementation of ResNet18 is adopted from \url{https://pytorch.org/hub/pytorch_vision_resnet/}.
The inception networks consist of a convolutional layer followed by an inception block, as illustrated in Fig.~\ref{fig:inception}, another convolutional layer, and finally two fully-connected layers.

\textbf{Table~\ref{tb:snopt}.}
We use the same networks as Chen et al. \cite{chen2018neural} and Kidger et al. \cite{kidger2020neural}, parametrizing $F(t, x_t, \theta)$ as CNNs and FCNs, respectively, for image classification and time-series prediction.
For CNF, we consider the same FCNs as Grathwohl et al. \cite{grathwohl2019ffjord}.

\textbf{Figure~\ref{fig:robust}.}
We employ the same inception network as in Table~\ref{tb:ddp-clf}.

\textbf{Figure~\ref{fig:bandit}} and \textbf{Table~\ref{tb:bandit}.}
For CIFAR10, we use the same ResNet18 architecture, while for SVHN, we employ a smaller residual network with only one residual block.

\textbf{Figure~\ref{fig:multiplayer}.}
The CNN comprises four convolutional layers with 3$\times$3 kernels and 32 feature maps, followed by two fully-connected layers, all activated by ReLU.

\textbf{Figure~\ref{fig:t1-opt}} and \textbf{Table~\ref{tb:t1-opt}.}
We utilize the same network as presented in Table~\ref{tb:snopt}.

\subsection{Baselines}
\noindent
For each baseline, we select its own hyper-parameters from an appropriate search space, detailed in Table~\ref{tb:hyper}. We utilize the official implementation from \url{https://github.com/Thrandis/EKFAC-pytorch} for EKFAC and implement our own E-MSA in PyTorch due to the lack of GPU support in the official code released by Li et al. \cite{li2017maximum}.

\subsection{Other Implementation Details}
\noindent
All experiments are conducted using PyTorch on GPU machines, including one GTX 1080 TI, one GTX 2070, one RTX TITAN, and four Tesla V100 SXM2 16GB. Numerical values reported in Table~\ref{tb:ddp-clf} are averaged over 4-10 random trials, whereas those in Table~\ref{tb:snopt} are averaged over 3 random trials. Finally, Table~\ref{tb:curvature} details the curvature approximation employed in our OCNOpt for each combination of dataset and network specified in Table~\ref{tb:ddp-clf}.

\section{Additional Experiments} \label{app:e}

\begin{table*}[!t]
    \caption{
        Performance comparison similar to Table~\ref{tb:snopt} using the Adams-Moulton method, \\ as opposed to the Runge-Kutta 4(5) method, as the numerical ODE integrator.
        \label{tb:snopt2}
    }
    \centering
    \begin{tabular}
        {lccccccccc}
    \toprule
    & \multicolumn{3}{c}{Image Classification} & \multicolumn{3}{c}{Time-Series Prediction} & \multicolumn{3}{c}{Continuous Normalizing Flow} \\[2pt]
    & {MNIST}  & {SVHN} & {CIFAR10}
    & {SpoAD}  & {ArtWR} & {CharT}
    & Circle & {Gas}  & {Miniboone} \\
    \midrule
    Adam & 98.86  &  91.76  &  77.22  &  95.33  &  86.28  &  88.83  &  0.90  &  -6.51  &  13.29 \\[2pt]
    SGD  & 98.71  &  94.19  &  76.48  &  \textbf{97.80}  &  87.05  &  95.38  &  0.93  &  -4.69  &  13.77 \\[1pt]
    \midrule
    \specialcelll{ \textbf{OCNOpt} \\ (\textbf{ours})}
                  & \textbf{98.95} &  \textbf{95.76} &  \textbf{79.00}
                  & 97.45          &  \textbf{89.50} &  \textbf{97.17}
                  & \textbf{0.86}  &  \textbf{-7.41} &  \textbf{12.37} \\
    \bottomrule
    \end{tabular}
\end{table*}

\begin{figure*}[!t]
    \centering
    \includegraphics[width=.99\linewidth]{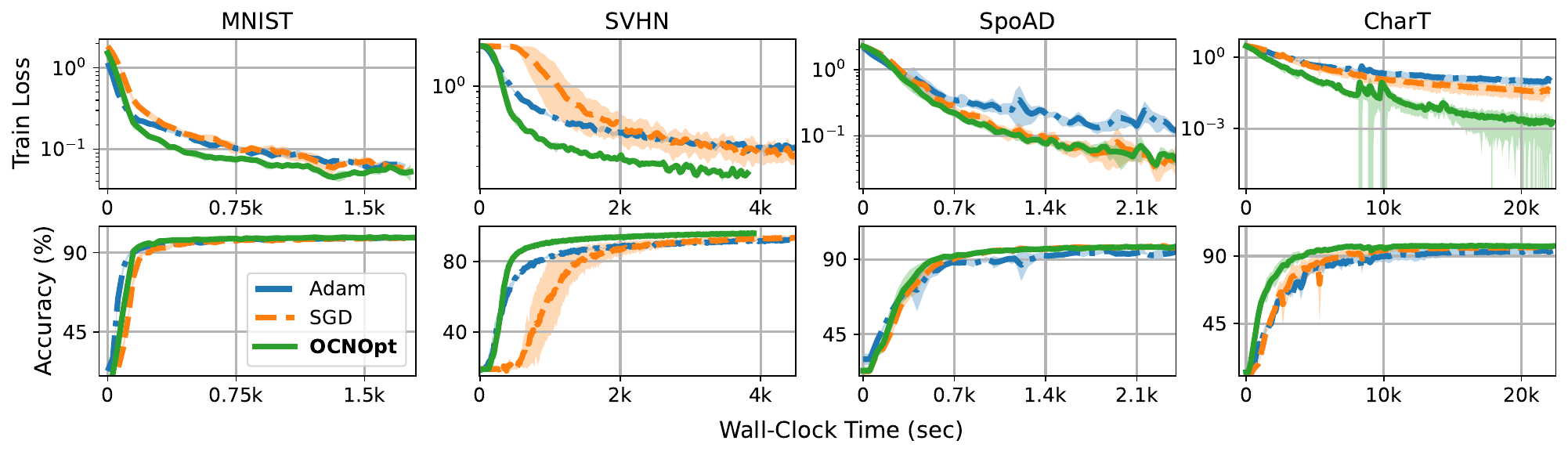}
    \caption{
        Convergence as a function of \emph{wall-clock} runtime for training Neural ODEs on other image classification and time-series prediction datasets.
        Note that the ``k'' in the x-axis abbreviates ``$1000$''.
        \label{fig:convergence2}
    }
\end{figure*}

\begin{figure}[!t]
  \centering
    \includegraphics[width=.5\linewidth]{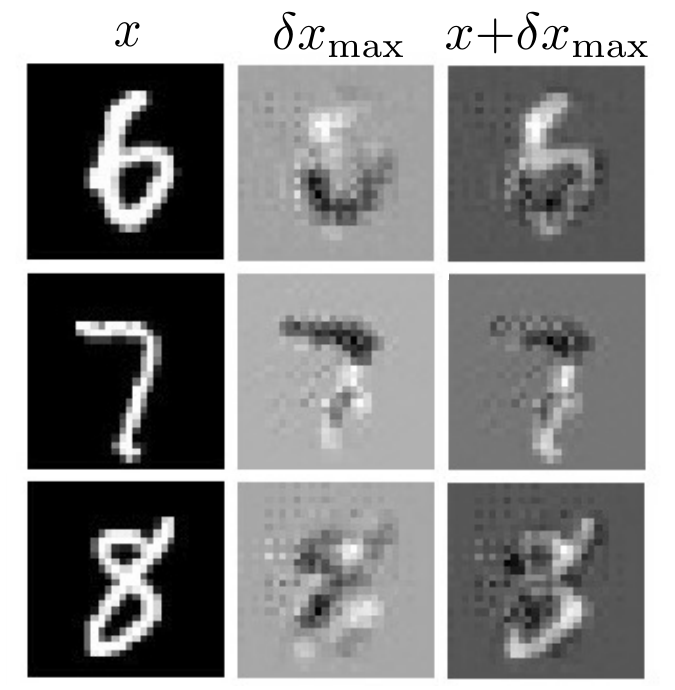}
  \caption{
    Visualization of the feedback policies on MNIST.
  }
  \label{fig:feedback-viz}
\end{figure}

\noindent
Table~\ref{tb:snopt2} presents a similar performance comparison using the Adams-Moulton method, as opposed to the Runge-Kutta 4(5) method in Table~\ref{tb:snopt}, as the numerical ODE integrator. The results suggest that the superior performance of OCNOpt remains consistent when using other ODE integrators.

Figure~\ref{fig:convergence2} illustrates convergence results on other datasets related to Neural ODEs.

Figure~\ref{fig:feedback-viz} attempts to visualize the feedback policies when training a CNN on MNIST. Specifically, we performed singular-value decomposition on the feedback gain $(\Quu)^{\Inv}\Qux$ and then projected the leading right-singular vector back to image space. Consequently, these feature maps, denoted by $\delta x_{\max}$ in Fig.~\ref{fig:feedback-viz}, correspond to the most responsive differences to which the policies will pay attention during training. Intriguingly, the policies seem to capture meaningful differences between visually similar classes, such as 7 and 1 (second row), or 8 and 3 (third row).

\end{document}